\newcommand{\algoName}[1]{{\tt #1}}
\newcommand{\bigO}{{\mathcal{O}}}
\newcommand{\horline}{\rule[1mm]{4.8in}{0.1mm}}
\newcommand{\aif}{{\bf if} }
\newcommand{\athen}{{\bf then} }
\newcommand{\afor}{{\bf for} }
\newcommand{\awhile}{{\bf while} }
\newcommand{\ALGO}{{\bf algorithm}}
\newtheorem{eg}{Example}
\newtheorem{definition}{Definition}
\newtheorem{property}{Property}
\newtheorem{corollary}{Corollary}
\newtheorem{proposition}{Proposition}
\newtheorem{theorem}{Theorem}
\newcommand {\bfigalgo} [2]
{
  \begin{figure}
  \vspace*{-0.5cm}
  \begin{tabbing}
    123456\=123\=123\=123\=123\=123\=123\=123\=123\=123\= \kill \\
    \horline\\
     {#1} #2 \{ \\
}
\newcommand {\efigalgo}[1]
{
    \horline
 \end{tabbing}
 \vspace*{-0.5cm}
 \caption{#1}
 \end{figure}
}
\def \color#1]#2{}
\begin{document}

\title[Solving Functional Constraints by Variable Substitution]{
Solving Functional Constraints by Variable Substitution
}

\author[Y. Zhang and R.~H.~C. Yap]
{YUANLIN ZHANG \\ 
Department of Computer Science \\
Texas Tech University, \\
Lubbock, TX79409-3104, USA \\
\email{yzhang@cs.ttu.edu} \\ 
\and
ROLAND H.C. YAP \\ 
School of Computing \\
National University of Singapore \\
13 Computing Drive, 117417, Singapore \\
\email{ryap@comp.nus.edu.sg}
}

\maketitle

\begin{abstract}

{\em Functional constraints} and {\em bi-functional constraints} are
an important constraint class in Constraint Programming (CP)
systems, in particular for Constraint Logic Programming (CLP)
systems. CP systems with finite domain constraints usually employ
CSP-based solvers which use local consistency, for example, arc
consistency. We introduce a new approach which is based instead on
{\em variable substitution}.  We obtain efficient algorithms for
reducing systems involving functional and {\em bi-functional
constraints} together with other non-functional constraints. It also
solves globally any CSP where there exists a variable such that any
other variable is reachable from it through a sequence of functional
constraints. Our experiments on random problems show that variable
elimination can significantly improve the efficiency of solving
problems with functional constraints.
\smallskip

\noindent
{\it To appear in Theory and Practice of Logic Programming (TPLP).}
\end{abstract}

\begin{keywords}
constraint logic programming, constraint satisfaction problem, 
functional constraints, variable substitution, arc consistency
\end{keywords}

\newcommand{\QED}{$\Box$}

\section{Introduction}

Functional constraints are a common class of constraints occurring
in Constraint Satisfaction Problem(s) (CSP)
\cite{SS77,vHDT92,Kir93}.
Roughly speaking, a constraint $c(x,y)$ is functional if the
value of variable $y$ is some function of the value of variable $x$
(see Definition \ref{def:func}).
Functional constraints arise in two ways, they may occur quite naturally
since one may have ``functions'' or ``equations'' in the constraint model.
Functional constraints also occur systematically in
in Constraint Programming (CP) when the system is a
Constraint Logic Programming (CLP) system.

Functional constraints arise naturally in CLP in two ways.
Firstly, the equations which arise
from matching the head of a rule with an atom in the body
are functional constraints.
Secondly, the basic (or primitive) constraints in a particular instance of
a CLP language will often include functional constraints.
Consider, one of the most widely used and successful constraint domains for
CLP, namely, finite domains which we will call CLP(FD).
The basic constraints in a CLP(FD) system,
for example, CHIP \cite{vHDT92}, can express functional constraints.
An example would be the finite domain constraint, $3X + 2Y = 10$,
with finite domain variables for $X$ and $Y$.\footnote{
Note that the examples which involve CLP use uppercase for variables
as per the logic programming convention. In a more general context,
we will use lowercase variables like $x$ and $y$ for variables.
}
Matching the head and body, gives rise to a number of equations,
and in a FD system, the equations are functional constraints.
For example, when matching
$p(Z^2+1)$ with a rule on $p(X)$ where both $X$ and $Z$ are finite
domain variables, a functional constraint $X = Z^2 + 1$ is produced.\footnote{
Notice that this is not a ``linear'' constraint in the arithmetic sense.
}
We remark that in logic programming, the equations are solved by
unification but in the general setting, constraint solving over the
particular domain is required.
Recognizing and exploiting functional constraints
can facilitate the development of more efficient constraint solvers
for CLP systems.

Most work on solving functional constraints follows the approach in CSP
which is based on arc or path consistency \cite{vHDT92,Dav95}. We
remark that, in many papers, ``functional constraints"
are actually what we call bi-functional constraints
(Definition \ref{def:bifunc}), a special case of functional constraints.
In this paper, we propose a
new method --- {\em variable substitution}
--- to process functional constraints. The idea is that if a
constraint is functional on a variable, this variable in another
constraint can be substituted away using the functional constraint
without losing any solution.\footnote{
A preliminary version of this paper appeared in \cite{ZYLM08}.
}

Given a variable, the {\em variable elimination} method substitutes
this variable in {\em all} constraints involving it such that it is
effectively ``eliminated'' from the problem. This idea is applied to
reduce any problem containing non-functional constraints into a
canonical form where some variables can be safely ignored when
solving the problem. We design an efficient algorithm to reduce, in
$\bigO(ed^2)$ where $e$ is the number of constraints and $d$ the
size of the largest domain of the variables, a general binary CSP
containing functional constraints into a canonical form. This
reduction simplifies the problem and makes the functional portion
trivially solvable. When the functional constraints are also
bi-functional, then the algorithm is linear in the size of the CSP.

Many CLP systems with finite domains make use of
constraint propagation algorithms such as arc consistency.
Unlike arc consistency, our elimination method completely solves the
functional portion of the problem, hence the functional constraints
are eliminated and their consequences are incorporated into the reduced
problem.
Our experiments show that the substitution based
``global'' treatment of functional
constraints can significantly speed up propagation based solvers.

In the rest of the paper, background on CSPs and functional
constraints is given in Section \ref{sec:preliminaries}. Variable
substitution for binary functional constraints is introduced and
studied in Section~\ref{sec:binarySubstitution}.
Section~\ref{sec:generalVariableElimination} presents several
results on algorithms for variable elimination in general CSPs
containing functional constraints. Section \ref{sec:exp} presents an
experimental study on the effectiveness of the variable elimination
algorithm and explains why functional elimination leads
to a smaller problem with a reduced search space.
In Section \ref{sec:beyond}, we extend binary functional constraints
to non-binary functional constraints and we discuss substitution for
general problems where there are non-binary constraints which may be
in extensional and intensional form.
Related work is discussed in
Section~\ref{sec:related}, and the paper is concluded in
Section~\ref{sec:conclusion}.

\section{Preliminaries}
\label{sec:preliminaries}
We begin with the basic concepts and notation
used in this paper.

A binary {\em Constraint Satisfaction Problem (CSP)} (N, D, C) consists of
a finite set of
variables $N= \{v_1,\cdots, v_n \}$, a set of domains $D=\{D_1,\cdots, D_n\}$,
where $D_i$ is the domain of variable $i$ , and a set of constraints
each of which is a binary relation between two variables in $N$.

A constraint between two variables $i$ and $j$ is denoted by
$c_{ij}$. Symbols $a$ and $b$ possibly with subscript denote the
values in a domain. A constraint $c_{ij}$ is a set of allowed
tuples. We assume testing whether a tuple belongs to a constraint
takes constant time. For $a \in D_i$ and $b \in D_j$, we use either
$(a,b) \in c_{ij}$ or $c_{ij}(a,b)$ to denote that values $a$ and
$b$ satisfy the constraint $c_{ij}$. For the problems of interest
here, we require that for all $a \in D_i$ and $b \in D_j$, $(a,b)
\in c_{ij}$ if and only if $(b,a) \in c_{ji}$. If there is no
constraint on $i$ and $j$, $c_{ij}$ denotes a universal relation,
i.e., $D_i \times D_j$.

A constraint graph $G = (V, E)$ where $V = N$ and
$E = \{ \{i,j\} ~|~ \exists c_{ij} \in C \}$.
The constraint graph is usually used to describe the topological
structure of a CSP.
A {\em solution} of a constraint satisfaction problem is an
assignment of a value to each variable such that the assignment
satisfies all the constraints in the problem. A CSP is {\em satisfiable}
if it has a solution. The {\em solution space} of a CSP is the
set of all its solutions. Two CSPs are {\em equivalent} if and only
if they have the same solution space.
Throughout this paper, $n$ represents the number of variables,
$d$ the size of the largest domain of the variables,
and $e$ the number of constraints in $C$.

We need two operations on constraints in this paper. One is the
intersection of two constraints (intersection of the sets of tuples)
that constrain the same set of variables.
The other operation is the composition,
denoted by the symbol ``$\circ$'' of two constraints
sharing a variable. The composition of two relations is:
\[ c_{jk} \circ c_{ij} =
   \{ (a,c)~|~ \exists b \in D_j, such~ that~
          (a,b) \in c_{ij} \wedge (b,c) \in c_{jk}  \}.
\]
Composition is a basic operation in our variable substitution
method. Composing $c_{ij}$ and $c_{jk}$ leads to a new constraint on
variables $i$ and $k$.

\begin{eg}
Consider constraints $c_{ij} = \{(a_1, b_1), (a_2, b_2), (a_2, b_3)\}$ and
$c_{jk} = \{(b_1, c_1), (b_2, c_2),$ $ (b_3, c_2)\}$. The composition of
$c_{ij}$ and $c_{jk}$ is a constraint on $i$ and $k$:
$c_{ik} = \{(a_1, c_1),$ $(a_2, c_2) \}$.
\end{eg}

\begin{definition}
\label{def:func}
A constraint $c_{ij}$ is {\em functional} on variable $j$ if for any
$a \in D_i$ there exists at most one $b \in D_j$ such that
$c_{ij}(a,b)$. $c_{ij}$ is {\em functional} on variable $i$ if
$c_{ji}$ is functional on $i$. Given a constraint $c_{ij}$
functional on variable $j$ and a value $a \in D_i$, we assume
throughout the paper that in constant time we can find the value $b
\in D_j$, if there is one, such that $(a,b) \in c_{ij}$.
\end{definition}

A special case of functional constraints are equations. These are
ubiquitous in CLP.
A typical functional constraint in arithmetic is a
binary linear equation like $2x = 5 - 3y$ which is
functional on $x$ and on $y$.
Functional constraints do not need to be linear. For example,
a nonlinear equation $x^2 = y^2$ where $x,y \in 1..10$ is also
functional on both $x$ and $y$.
In scene labeling problems \cite{Kir93},
there are many functional constraints and other
special constraints.

When a constraint $c_{ij}$ is functional on variable $j$, for
simplicity, we say $c_{ij}$ is functional by making use of the fact
that the subscripts of $c_{ij}$ are an ordered pair. When $c_{ij}$
is functional on variable $i$, $c_{ji}$ is said to be functional.
That $c_{ij}$ is functional does not mean $c_{ji}$ is functional. In
this paper, the definition of functional constraints is different
from the one in \cite{ZYJ99,vHDT92} where constraints are functional
on each of its variables, leading to the following notion.

\begin{definition}
\label{def:bifunc}
A constraint $c_{ij}$ is {\em bi-functional} if $c_{ij}$ is
functional on variable $i$ and also on variable $j$.
\end{definition}

A bi-functional constraint is called {\em bijective} in \cite{Dav95}.
For functional constraints, we have the following property on their
composition and intersection: 1) If $c_{ij}$ and $c_{jk}$ are
functional on variables $j$ and $k$ respectively, their composition
remains functional; and 2) The intersection of two functional
constraints remains functional.

\begin{eg}
The constraint $c_{ij} = \{(a_1, b_1), (a_2, b_1),
(a_3, b_2)\}$ is functional, while the constraint $c_{ij} = \{(a_1,
b_3), (a_2, b_1), (a_3, b_2)\}$ is both functional and
bi-functional. An example of a non-functional constraint is
$c_{ij} = \{(a_1, b_1), (a_1, b_2), (a_2, b_1), (a_3, b_2)\}$.
\end{eg}

In the remainder of the paper, rather than writing $v_i$,
we will simply refer to a variable by its subscript,
i.e. $i$ rather than $v_i$.

\section{Variable Substitution and Elimination Using
Binary Functional Constraints}
\label{sec:binarySubstitution}

We introduce the idea of variable substitution. Given a CSP $(N, D,
C)$, a constraint $c_{ij} \in C$ that is functional on $j$, and a
constraint $c_{jk}$ in $C$, we can substitute $j$ by $i$ in
$c_{jk}$ by composing $c_{ij}$ and $c_{jk}$. If there is already a
constraint $c_{ik} \in C$, the new constraint on $i$ and $k$ is
simply the intersection of $c_{ik}$ and $c_{jk} \circ c_{ij}$.

\begin{definition} \label{def:substitution}
Consider a CSP $(N,D,C)$, a constraint $c_{ij} \in C$ functional
on $j$, and a constraint $c_{jk} \in C$. To {\em substitute $j$ by $i$}
in $c_{jk}$, using $c_{ij}$, is to get a new CSP where $c_{jk}$ is replaced by
$c'_{ik} = c_{ik} \cap (c_{jk} \circ c_{ij})$.
The variable $i$ is called the {\em substitution variable}.
\end{definition}

A fundamental property
of variable substitution is that it preserves the solution space of
the problem.

\begin{property} \label{prop:substitution}
Given a CSP $(N,D,C)$, a constraint $c_{ij} \in C$ functional on $j$, and
a constraint $c_{jk} \in C$, the new problem obtained by substituting $j$
by $i$ in $c_{jk}$ is equivalent to $(N,D,C)$.
\end{property}

\begin{proof}
Let the new problem after substituting $j$ by $i$ in
$c_{jk}$ be $(N, D, C')$ where $C' = (C - \{c_{jk}\}) \cup \{c'_{ik}\}$
and $c'_{ik} = c_{ik} \cap (c_{jk} \circ c_{ij})$.

Assume ($a_1, a_2, \cdots, a_n$) is a solution of ($N, D, C$). We
need to show that it satisfies $C'$. The major difference between
$C'$ and $C$ is that $C'$ has new constraint $c'_{ik}$. It is known
that $(a_i, a_j) \in c_{ij}$, $(a_j, a_k) \in c_{jk}$, and  if there
is $c_{ik}$ in $C$, $(a_i, a_k) \in c_{ik}$. The fact that $c_{ik}'
= (c_{jk} \circ c_{ij}) \cap c_{ik}$ implies $(a_i, a_k) \in
c'_{ik}$. Hence, $c'_{ik}$ is satisfied by ($a_1, a_2, \cdots,
a_n$).

Conversely, we need to show that any solution ($a_1, a_2, \cdots, a_n$) of
($N, D, C'$) is a solution of ($N, D, C$). Given the difference between
$C'$ and $C$, it is sufficient to show the solution satisfies
$c_{jk}$.
We have $(a_i, a_j) \in c_{ij}$ and $(a_i, a_k) \in c'_{ik}$.
Since $c'_{ik} = (c_{jk} \circ c_{ij}) \cap c_{ik}$,
there must exist $b \in D_{j}$ such that $(a_i, b) \in c_{ij}$
and $(b, a_k) \in c_{jk}$.
As $c_{ij}$ is functional, $b$ has to be $a_j$.
Hence, $a_j$ and $a_k$ satisfy $c_{jk}$.
\end{proof}

Based on variable substitution, we can eliminate a variable from
a problem so that no constraint will be on this variable
(except the functional constraint used to substitute it).

\begin{definition} \label{def:elimination}
Given a CSP $(N, D, C)$ and a constraint $c_{ij} \in C$ functional on $j$,
to {\em eliminate $j$ using $c_{ij}$} is to substitute $j$ by $i$,
using $c_{ij}$, in every constraint $c_{jk} \in C$
(except $c_{ji}$).
\end{definition}

We can also substitute $j$ by $i$ in  $c_{ji}$ to obtain $c'_{ii}$ and then
intersect $c'_{ii}$ with the identity relation on $D_i$, equivalent to
a direct revision of the domain of $i$ with respect to $c_{ij}$.
This would make the algorithms presented in this paper more
uniform, i.e., only operations on constraints are used.
Since in most algorithms we want to make domain revision explicit,
we choose not to substitute $j$ by $i$ in $c_{ji}$.

Given a functional constraint $c_{ij}$ of a CSP $(N,D,C)$,
let $C_j$ be the set of all constraints involving $j$, except $c_{ij}$.
The elimination of $j$ using $c_{ij}$ results in a new problem $(N,D,C')$ where
\[C' = (C - C_j) \cup \{c'_{ik} ~|~
  c'_{ik} = (c_{jk} \circ c_{ij}) \cap c_{ik}, c_{jk} \in C\}.\]
In the new problem, there is only one constraint $c_{ij}$ on $j$ and thus
$j$ can be regarded as being ``eliminated''.

\begin{eg}
Consider a problem with three constraints whose
constraint graph is shown in Figure \ref{fig:elimination}(a). Let
$c_{ij}$ be functional which this is indicated by the arrow in the diagram.
The CSP after $j$ has been eliminated using
$c_{ij}$ is shown in Figure~\ref{fig:elimination}(b). In the new
CSP, constraints $c_{jk}$ and $c_{jl}$ are discarded, and new
constraints $c_{ik} = c_{jk} \circ c_{ij}$ and $c_{il} = c_{jl}
\circ c_{ij}$ are added.
Note that the other edges are not directed as the
constraints $c_{jk}, c_{jl}, c_{ik}, c_{il}$ may not be functional.
\end{eg}

\begin{figure}
\includegraphics[width=9cm]{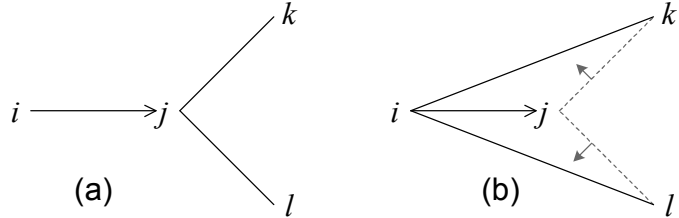}
\caption{(a): A CSP with a functional constraint $c_{ij}$. (b): The
new CSP after eliminating the variable $j$ using $c_{ij}$.}
\label{fig:elimination}
\end{figure}

The variable elimination involves ``several" substitutions and thus
preserves the solution space of the original problem by
Property~\ref{prop:substitution}.

\begin{corollary} \label {prop:elimination}
Given a CSP ($N, D, C$) and a functional constraint $c_{ij} \in C$,
the new problem ($N, D, C'$) obtained by the elimination of variable $j$
using $c_{ij}$ is equivalent to ($N, D, C$).
\end{corollary}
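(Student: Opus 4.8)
The plan is to derive this corollary directly from Property~\ref{prop:substitution} by regarding the elimination of $j$ as a finite sequence of single variable substitutions and arguing, by induction on the number of substitutions, that solution space is preserved throughout. First I would enumerate the constraints of $C_j$ other than $c_{ij}$ (equivalently, other than $c_{ji}$) as $c_{jk_1}, \ldots, c_{jk_m}$, and build a chain of CSPs $(N,D,C_0), (N,D,C_1), \ldots, (N,D,C_m)$ where $C_0 = C$ and $C_t$ is obtained from $C_{t-1}$ by substituting $j$ by $i$ in $c_{jk_t}$ using $c_{ij}$, in the sense of Definition~\ref{def:substitution}. The target is to show both that consecutive CSPs in the chain are equivalent and that $C_m$ coincides with the single-shot $C'$ of the elimination definition.

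For the equivalence of consecutive CSPs I would invoke Property~\ref{prop:substitution} at each step, so that $C_{t-1}$ is equivalent to $C_t$; transitivity of equivalence then gives $C = C_0 \equiv C_m$. The key point that makes this repeated application legitimate is that Property~\ref{prop:substitution} presupposes that $c_{ij}$ is still present in $C_{t-1}$ and functional on $j$. Each substitution step removes only some $c_{jk_s}$ and introduces a replacement $c'_{ik_s}$ on the pair $(i,k_s)$ with $k_s \ne j$; consequently the functional constraint $c_{ij}$ is never deleted or altered along the chain and remains available and functional at every step. I expect this bookkeeping---verifying that $c_{ij}$ survives all intermediate stages---to be the main thing requiring care, even though it is conceptually simple.

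Finally I would check that $C_m = C'$. Since distinct constraints $c_{jk_s}$ yield replacement constraints on distinct variable pairs $(i,k_s)$, the individual substitutions do not interfere with one another: the $t$-th step produces exactly $c'_{ik_t} = (c_{jk_t} \circ c_{ij}) \cap c_{ik_t}$ while leaving the remaining $c_{jk_s}$ untouched until their own turn. Hence the cumulative effect of the whole chain is precisely the set $C'$ given by the elimination definition, and combining this with $C \equiv C_m$ establishes that $(N,D,C')$ is equivalent to $(N,D,C)$.
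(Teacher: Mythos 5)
Your proposal is correct and matches the paper's approach: the paper justifies this corollary in one line by observing that elimination is a sequence of substitutions, each of which preserves the solution space by Property~\ref{prop:substitution}. You have simply filled in the induction and the bookkeeping (that $c_{ij}$ survives and stays functional at every intermediate step, and that the accumulated result equals $C'$), which the paper leaves implicit.
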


\section{Elimination Algorithms for CSPs with Functional Constraints
and Non-Functional Constraints}
\label{sec:generalVariableElimination}
We now extend variable elimination to general CSPs with functional
and non-functional constraints.
The idea of variable elimination (Definition~\ref{def:elimination} in
Section~\ref{sec:binarySubstitution}) can be used to reduce
a CSP to the following canonical functional form.

\begin{definition}
A CSP $(N,D,C)$ is in {\em canonical functional form}
if for any constraint $c_{ij} \in C$ functional on $j$,
the following conditions are satisfied:
1) if $c_{ji}$ is also functional on $i$(i.e., $c_{ij}$ is bi-functional),
either $i$ or $j$ is not constrained by any other constraint in $C$;
2) otherwise, $j$ is not constrained by any other constraint in $C$.
\end{definition}

As a trivial example, a CSP without any functional constraint
is in canonical functional form. If a CSP contains some
functional constraints, it is in canonical functional form
intuitively if for any functional constraint $c_{ij}$,
there is only one constraint on $j$. As an exception, the first
condition in the definition implies that
when $c_{ij}$ is bi-functional, one variable of $\{i, j\}$
might have several bi-functional constraints on it.

In a canonical functional form CSP,
the functional constraints form {\em disjoint star graphs}.
A {\em star graph} is a tree where there exists a node,
called the {\em center}, which we call the
{\em free variable}, such that there is an edge between
this center node and every other node, which we call
and eliminated variable.
The constraint between the free variable and eliminated variable is that
it is functional on the eliminated variable.
In Figure~\ref{fig:elimination}(a), assuming $c_{jk}$ and $c_{jl}$
are functional on $k$ and $l$ respectively, then there would be
directed edges (arrows) from $j$ to $k$ and $j$ to $l$. After
eliminating $j$, we get a star graph in
Figure~\ref{fig:elimination}(b), since $i$ will be the free variable
at the center of the star graph, with free variables $k$ and $l$.
Notice that before eliminating $j$, Figure~\ref{fig:elimination}(a)
is a star graph, but the constraints are not in a canonical form.

The constraint between a free variable $i$
and an eliminated variable $j$ is functional on $j$, but it may or
may not be functional on $i$. In the special case that the star
graph contains only two variables $i$ and $j$ and $c_{ij}$ is
bi-functional, one of the variables can be called a free
variable while the other is called an eliminated variable.

If a CSP is in canonical functional form, all functional constraints
and the eliminated variables can be {\em ignored} when we try to find a solution
for this problem.
Thus, to solve a CSP $(N,D,C)$ in canonical
functional form whose non-eliminated variables are $NE$, we only need to
solve a smaller problem $(NE, D', C')$
where $D'$ is the set of domains of the variables $NE$ and
$C' = \{c_{ij} ~|~ c_{ij} \in C \mbox{ and } i,j \in NE \}$.

\begin{proposition} \label{prop:solveCanonicalForm}
Consider a CSP $P_1=(N,D,C)$ in a canonical functional form
and a new CSP $P_2=(NE, D', C')$ formed by ignoring the eliminated
variables in $P_1$. For any free variable $i \in N$ and any
constraint $c_{ij} \in C$ functional on $j$, assume any value
of $D_i$ has a support in $D_j$ and this
support can be found in constant time.
Any solution of $P_2$ is extensible to a unique solution of
$P_1$ in $\bigO(|N-NE|)$ time.
Any solution of $P_1$ can be
obtained from a solution of $P_2$.
\end{proposition}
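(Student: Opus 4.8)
The plan is to exploit the star-graph structure of the canonical functional form so that each eliminated variable can be assigned independently and deterministically from the free-variable values fixed by a solution of $P_2$. First I would record the key structural fact: since $P_1$ is in canonical functional form, every eliminated variable $j \in N - NE$ is constrained by exactly one constraint of $C$, namely a functional constraint $c_{ij}$ whose other endpoint $i$ is a free variable and hence belongs to $NE$. Indeed, if $c_{ij}$ is not bi-functional the definition forces $j$ to carry no other constraint; in the bi-functional case the definition designates one endpoint as unconstrained by anything else, and that endpoint is precisely the eliminated variable $j$. Thus no two eliminated variables share a constraint, and no eliminated variable is linked to a non-free variable.

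Given a solution $s$ of $P_2$, I would extend it to an assignment $s^{*}$ of $P_1$ as follows. On $NE$, let $s^{*}$ agree with $s$; for each eliminated variable $j$, with its unique functional constraint $c_{ij}$ to the free variable $i$, set $s^{*}(j)$ to be the support of $s(i)$ in $D_j$. This support exists by the hypothesis that every value of $D_i$ has a support in $D_j$, and it is unique because $c_{ij}$ is functional on $j$. To see that $s^{*}$ is a solution of $P_1$, note that every constraint of $C$ either lies in $C'$ (both endpoints in $NE$), in which case it is satisfied because $s$ solves $P_2$, or it is the single functional constraint $c_{ij}$ on some eliminated $j$, which is satisfied by construction; no further constraints exist by the structural fact above. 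Uniqueness of the extension is immediate: any solution of $P_1$ agreeing with $s$ on $NE$ must, by functionality of each $c_{ij}$, take the same forced value at every eliminated $j$. Since each support lookup is constant time and there are $|N-NE|$ eliminated variables, $s^{*}$ is computed in $\bigO(|N-NE|)$ time.

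For the second statement, I would observe that any solution $t$ of $P_1$ restricts on $NE$ to an assignment satisfying every constraint of $C' \subseteq C$, hence a solution $t|_{NE}$ of $P_2$; by the uniqueness just established, $t$ coincides with the extension $(t|_{NE})^{*}$, so every solution of $P_1$ arises from a solution of $P_2$. The hard part will not be the extension construction itself but the structural bookkeeping of the first paragraph: one must verify, using both clauses of the canonical-form definition, that each eliminated variable is attached to exactly one free variable by a single functional constraint and to nothing else, since it is this independence across eliminated variables that guarantees the component-wise assignment never produces a conflict.
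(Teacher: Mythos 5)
Your proposal is correct and follows essentially the same route as the paper's proof: extend a solution of $P_2$ by assigning each eliminated variable the unique support of its free variable's value (existence from the hypothesis, uniqueness from functionality), and obtain the converse by restricting a solution of $P_1$ to $NE$ using $C' \subseteq C$. The only difference is that you make explicit the structural bookkeeping (each eliminated variable carries exactly one constraint, to a free variable), which the paper simply asserts.
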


\begin{proof}
Let $(a_1, a_2, \cdots, a_{|NE|})$ be a solution of $(NE, D', C')$.
Consider any eliminated variable $j \in N-NE$. In $C$, there is only
one constraint on $j$.
Let it be $c_{ij}$ where $i$ must be a free
variable. By the assumption of the proposition, the value
of $i$ in the solution has a unique support in $j$.
This support will be assigned to $j$.
In this way, a unique solution for $(N,D,C)$
is obtained. The complexity of this extension is $\bigO(|N-NE|)$.

Let $S$ be a solution of $(N,D,C)$ and $S'$ the portion of $S$
restricted to the variables in $NE$.
$S'$ is a solution of $(NE, D', C')$ because
$C' \subseteq C$.
$S'$ can then be extended to $S$ by using the functional constraints
on the values of the free variables in $S'$
to give unique values for the variables in $N-NE$.
\end{proof}

Any CSP with functional constraints can be transformed into
canonical functional form by variable elimination using the
algorithm in Figure~\ref{algo:variable-elimination}. Given a
constraint $c_{ij}$ functional on $j$, line 1 of the algorithm
substitutes $j$ by $i$ in all constraints involving $j$. Note the
arc consistency on $c_{ik}$, for all neighbor $k$ of $i$, is
enforced by line 3.

\bfigalgo {\ALGO} {Variable\_Elimination({\bf inout} $(N, D, C)$,
    {\bf out} {\em consistent}) }

  \> $L \leftarrow N$; \\

  \> \awhile(There is $c_{ij} \in C$ functional on $j$ where $i,j \in L$ and $i \neq j$)\{ \\
  \>\>  // Eliminate variable $j$, \\
1.  \>\>  $C \leftarrow \{c'_{ik} ~|~
     c'_{ik} \leftarrow (c_{jk} \circ c_{ij}) \cap c_{ik}, c_{jk} \in C, k \neq i\} \cup (C - \{c_{jk} \in C ~|~ k \neq i\})$; \\
2.  \>\>  $L \leftarrow L -\{ j \}$; \\
3.  \>\>  Revise the domain of $i$ wrt $c_{ik}$ for every neighbour $k$ of $i$; \\
  \>\>  \aif ($D_i$ is empty) \athen \{ $consistent$ $\leftarrow$ {\bf false}; return \} \\
  \> \}\\
  \> $consistent$ $\leftarrow$ {\bf true}; \\
\}\\
\efigalgo {\label {algo:variable-elimination} A variable elimination algorithm
to transform a CSP into a canonical functional form.}

\begin{theorem} \label{th:basic}
Given a CSP $(N,D,C)$, \algoName{Variable\_Elimination} transforms
the problem into a canonical functional form in $\bigO(n^2d^2)$.
\end{theorem}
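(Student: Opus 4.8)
The plan is to separate the claim into two parts: that the CSP returned by \algoName{Variable\_Elimination} is in canonical functional form, and that the running time is $\bigO(n^2d^2)$. Solution-preservation comes essentially for free and I would dispatch it first: each pass through the loop body performs one variable elimination in line~1, which by Corollary~\ref{prop:elimination} preserves the solution space, followed by a domain revision in line~3 that only discards values lacking a support and is therefore also solution-preserving, with an empty domain correctly reporting unsatisfiability. Termination is equally immediate, since line~2 deletes one variable from $L$ per iteration, so the loop runs at most $n$ times.

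The core of the correctness argument is a loop invariant: \emph{for every variable $j \notin L$, the set $C$ contains exactly one constraint incident to $j$, and that constraint is functional on $j$}. I would prove this by induction on the iterations. It holds vacuously when $L=N$. When $j$ is eliminated, line~1 rewrites away every $c_{jk}$ with $k\neq i$, leaving only $c_{ij}$, which is functional on $j$, so the invariant is established for the newly removed $j$. The subtle case is preserving the invariant for an already-eliminated variable $j$ when some later variable $j'$ is eliminated: since $j$'s unique constraint is some $c_{mj}$, the variable $j$ is affected only if $j'=m$, in which case $c_{mj}$ is replaced by $c_{mj}\circ c_{i'm}$ (there is no prior $c_{i'j}$ to intersect with, again by the invariant), which by the composition property stated after Definition~\ref{def:bifunc} remains functional on $j$ and is once more the unique constraint on $j$. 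Granting the invariant, I analyze the exit condition: for any $c_{ij}\in C$ functional on $j$ (with $i\neq j$, as the constraint is binary) at termination, either $j\notin L$, whence $c_{ij}$ is $j$'s only constraint and condition~2 of the canonical form holds directly (condition~1 if the constraint happens to be bi-functional); or $j\in L$, in which case the failed loop guard forces $i\notin L$, and the invariant then makes $c_{ij}$ the unique constraint on $i$ and functional on $i$, so $c_{ij}$ is bi-functional with $i$ unconstrained elsewhere, giving condition~1.

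For the complexity bound I would charge the cost of a single iteration. Eliminating $j$ in line~1 recomputes $c'_{ik}$ for each of the at most $n-1$ neighbours $k$ of $j$; each composition $c_{jk}\circ c_{ij}$ costs $\bigO(d^2)$, because functionality of $c_{ij}$ lets us find, for each of the $d$ values $a\in D_i$, its unique image $b\in D_j$ in constant time and then test all $d$ values $c\in D_k$, and the ensuing intersection with $c_{ik}$ is likewise $\bigO(d^2)$. Line~3 revises $D_i$ against each of the at most $n-1$ constraints incident to $i$ at $\bigO(d^2)$ apiece. Hence one iteration is $\bigO(nd^2)$, and with at most $n$ iterations the total is $\bigO(n^2d^2)$.

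I expect the main obstacle to be the preservation step of the invariant rather than the arithmetic: one must verify that later eliminations never reintroduce a second constraint onto an already-eliminated variable and never destroy the functionality of its single remaining constraint. This is exactly what guarantees that the terminal configuration is a disjoint union of functional star graphs, and thus that the exit condition of the loop coincides with the definition of canonical functional form.
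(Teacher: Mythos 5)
Your proof is correct and follows essentially the same route as the paper's: the same two-case analysis at termination (based on whether $j$ remains in $L$, with the failed loop guard forcing $i \notin L$ and hence bi-functionality in the second case) and the same complexity accounting of $\bigO(d^2)$ per composition, $\bigO(nd^2)$ per elimination, $\bigO(n^2d^2)$ overall. Your one genuine addition is the explicit loop invariant — that every eliminated variable retains exactly one incident constraint, still functional on it, under all subsequent eliminations — which the paper only asserts implicitly ("thanks to line 1"); your induction step via the composition-of-functional-constraints property closes that small gap cleanly.
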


\begin{proof}
Assume \algoName{Variable\_Elimination} transforms a CSP
$P_1=(N, D, C)$ into a new problem $P_2=(N,D',C')$. We show that
$P_2$ is in canonical functional form. For any constraint $c_{ij}
\in C'$ functional on $j$, there are two cases. Case 1: $j \notin L$
when the algorithm terminates. Since $j \in L$ when the algorithm
starts and line 2 is the only place where $j$ can be removed from
$L$, $j$ must have been eliminated at certain step of the while
loop. In line 1 (the component after ``$\cup$''), all constraints on
$j$ (except $c_{ij}$) are removed. That is $c_{ij}$ is the unique
constraint on $j$. Case 2: $j \in L$ when the algorithm terminates.
Since $c_{ij}$ is functional on $j$, variable $i$ is not in $L$ when
the algorithm terminates (otherwise, $j$ will be substituted by line
1 at certain step of the while loop). Therefore, $i$ is removed from
$L$ at certain step of the while loop. $i$ is not substituted using
$c_{ki}$ where $k \neq j$ (otherwise $c_{ij} \not \in C'$ because of
the elimination of $i$). This implies that $i$ was substituted using
$c_{ji}$, and thus $c_{ji}$ is functional on $i$ by the loop
condition. Hence, $c_{ij}$ is bi-functional, and $i$ is not
constrained by any other constraints (thanks to line 1). Therefore,
cases 1 and 2 show that $P_2$ is in canonical functional form.

Next, we prove the complexity of \algoName{Variable\_Elimination}.
The algorithm eliminates any variable in $N$ at most once because
once it is eliminated it is removed from $L$ (line 2). Assume,
before the algorithm, there is at most one constraint on any pair of
variables (otherwise, we take the intersection of all constraints on
the same pair of variables as the unique constraint). This property
holds during the elimination process because in line 1, the
intersection in the component before ``$\cup$'' guarantees that we
have only one copy of constraint on any two variables. So, for each
variable $j$ and a constraint $c_{ij}$ functional on $j$, there are
at most $n-2$ other constraints on $j$. The variable $j$ in those
constraints needs to be substituted (line 1).

The complexity of the substitution $j$ in each constraint is
$\bigO(d^2)$ which is the cost of the composition of a functional
constraint and a general constraint. Recall that, for a functional
constraint $c_{ij}$, given a value $a \in D_i$, we can find its
support in $D_j$ in constant time. To compose $c_{ij}$ with a
general constraint $c_{jk}$, for each value $a \in D_i$, we find its
support $b \in D_j$ (in constant time). If we take each constraint
as a matrix, the row of $b$ of $c_{jk}$ will be the row of $c_{jk}
\circ c_{ij}$, which takes $d$ steps. Therefore, the cost of
computing $c_{jk} \circ c_{ij}$ is $\bigO(d^2)$.

For $n-2$ constraints, the elimination of $j$ (line 1) takes
$\bigO(nd^2)$. There are at most $n-1$ variables to eliminate and
thus the worst case complexity of the algorithm is $\bigO(n^2d^2)$.
\end{proof}

It is worth noting that the variable elimination algorithm
is able to globally solve some CSPs containing non-functional constraints.

\begin{eg}
Consider a simple example where there are three variables $i,j$, and
$k$ whose domains are $\{1,2,3\}$ and the constraints are $i=j$,
$i=k+1$, and $j\neq k$. Note that although the constraints are
listed in an equational form, the actual constraints are explicit
and discrete, thus normal equational reasoning might not be
applicable. By eliminating $j$ using $c_{ij}$, $c_{ik}$ becomes
$\{(2,1), (3,2)\}$, and the domain of $i$ becomes $\{2,3\}$. The
non-functional constraint $c_{jk}$ is gone. The problem is in
canonical functional form. A solution can be obtained by letting $i$
be $2$ and consequently $j=2$ and $k=1$.
\end{eg}

By carefully choosing an ordering of the variables to eliminate, a
faster algorithm can be obtained. The intuition is that once a
variable $i$ is used to substitute for other variables, $i$ itself
should not be substituted by any other variable later.

\begin{eg}
Consider a CSP with functional constraints $c_{ij}$
and $c_{jk}$. Its constraint graph is shown in
Figure~\ref{fig:funElimOrder}(a) where a functional constraint is
represented by an arrow. If we eliminate $k$ and then $j$, we first
get $c_{jl_1}$ and $c_{jl_2}$, and then get $c_{il_1}$ and
$c_{il_2}$. Note that $k$ is first substituted by $j$ and
then later $j$ is substituted by $i$.
If we eliminate $j$ and then $k$, we
first get $c_{ik}$, and then get $c_{il_1}$ and $c_{il_2}$. In this
way, we reduce the number of compositions of constraints.
\end{eg}

\begin{figure}
\includegraphics{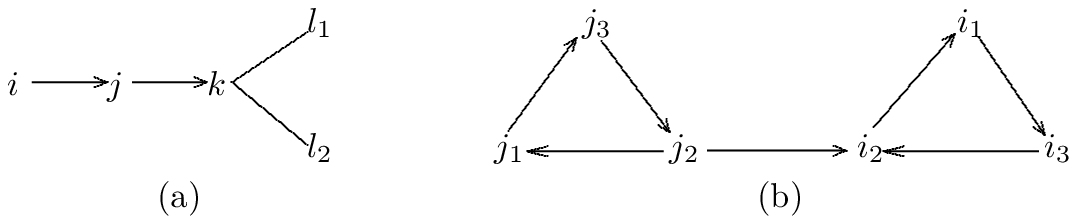}
\caption{\label{fig:funElimOrder} (a) The constraint graph of a CSP
with functional constraints $c_{ij}$ and $c_{jk}$. (b) A directed
graph.}
\end{figure}

Given a CSP $P=(N,D,C)$, $P^F$ is used to denote its directed graph
$(V, E)$ where $V = N$ and $E = \{ (i, j) ~|~ c_{ij} \in C \mbox{
and $c_{ij}$ is functional on $j$} \}$. Non-functional constraints
in $C$ do not appear in $P^F$. A subgraph of a directed graph is
{\em strongly connected} if for any two vertices of the subgraph,
any one of them is reachable from the other. A {\em strongly
connected component} of a directed graph is a maximum subgraph that
is strongly connected. To describe our algorithm we need the
following notation.

\begin{definition}
Given a directed graph $(V,E)$, a sequence of the nodes of $V$ is a
\emph{functional elimination ordering} if for any two nodes $i$ and
$j$, $i$ before $j$ in the sequence implies that there is a path
from $i$ and $j$. A \emph{functional elimination ordering of a CSP
problem} $P$ is a functional elimination ordering of $P^F$.
\end{definition}

The functional elimination ordering is used to overcome the
redundant computation shown in the example on
Figure~\ref{fig:funElimOrder}(a). Given a directed graph $G$, a
functional elimination ordering can be found by:
1) finding all the strongly connected components of $G$;
2) modifying $G$ by taking every component as one vertex with
  edges changed and/or added accordingly;
3) finding a topological ordering of the nodes in the new
  graph; and
4) replacing any vertex $v$ in the ordering by any sequence of the vertices of the
  strongly connected component represented by $v$.

To illustrate the process, consider the example in
Figure~\ref{fig:funElimOrder}(b) which can be taken as $P^F$ for some
CSP problem $P$. All strongly connected components are $\{j_1, j_2,
j_3\}$, denoted by $c_1$, and $\{i_1, i_2, i_3\}$, denoted by $c_2$.
We construct the new graph by replacing the components by vertices:
$(\{c_1, c_2\}, \{(c_1, c_2)\})$. We have the edge $(c_1, c_2)$
because the two components are connected by $(j_2, i_2)$. The
topological ordering of the new graph is $\langle c_1, c_2 \rangle$.
Now we can replace $c_1$ by any sequence of $j$'s and $c_2$ by any
sequence of $i$'s. For example, we can have a functional elimination
ordering $\langle j_3, j_2, j_1, i_2, i_3, i_1 \rangle$.

The algorithm \algoName{Linear\_Elimination} in
Figure~\ref{algo:linearElimination} first finds a functional
elimination ordering $O$ (line 1). 
The body of the while loop at line 4 is
to \emph{process} all the variables in $O$. Every variable $i$ of
$O$ is \emph{processed} as follows: $i$ will be used to substitute
for all the variables \emph{reachable} from $i$ \emph{through
constraints that are functional in $C^0$ and still exist in the
current $C$}. Those constraints are called \emph{qualified}
constraints. Specifically, $L$ initially holds the immediate
reachable variables through qualified constraints (line 8). Line 9
is a loop to eliminate all variables reachable from $i$. The loop at
line 11 is to eliminate $j$ using $i$ from the current $C$. In this
loop, if a constraint $c_{jk}$ is qualified (line 14), $k$ is
reachable from $i$ through qualified constraints. Therefore, it is
put into $L$ (line 15).

\newcommand{\done}{\emph{eliminated}}

\bfigalgo {\ALGO} {Linear\_Elimination({\bf inout} $(N, D, C))$}
\ 1. Find a functional elimination ordering $O$ of the problem; \\
\ 2. Let $C^0$ be $C$; any $c_{ij}$ in $C^0$ is denoted by $c_{ij}^0$;\\
\ 3. For each $i \in N$, it is marked as {\em not} \done; \\
\ 4. \awhile($O$ is not empty) \{ \\
\ 5. \> Take and delete the first variable $i$ from $O$; \\
\ 6. \> \aif ($i$ is {\bf not} \done) \{ \\
\ 8. \> \> $L \leftarrow \{j ~|~ (i,j) \in C \mbox{ and } c_{ij}^0 \mbox{ is functional} \}$; \\
\ 9. \> \> \awhile($L$ not empty) \{\\
\    \> \> \> Take and delete $j$ from $L$; \\
\ 11. \> \> \> \afor any $c_{jk} \in C-\{c_{ji}\}$ \{ // Substitute $j$ by $i$ in $c_{jk}$; \\
\    \> \> \> \> $c'_{ik} \leftarrow c_{jk} \circ c_{ij} \cap c_{ik}$; \\
\    \> \> \> \> $C \leftarrow C \cup \{c'_{ik}\} - \{c_{jk}\}$; \\
\ 14.   \> \> \> \> \aif ($c_{jk}^0$ is functional) \athen\\
\ 15.   \> \> \> \> \> $L \leftarrow L \cup \{k\}$;\\
\    \> \> \> \} \\
\ 16.   \> \> \> Mark $j$ as \done; \\
\    \> \> \} // loop on $L$ \\
\ \> \} \\
\ \ \ \ \} // loop on $O$ \\
\ \} // end of algorithm \\
\efigalgo {\label {algo:linearElimination} A variable elimination
algorithm of complexity $O(ed^2)$.}

To illustrate the ideas underlying the algorithm, consider the
example in Figure~\ref{fig:funElimOrder}(b). Now, we assume the edges
in the graph are the only constraints in the problem. Assume the
algorithm finds the ordering given earlier: $O = \langle j_3, j_2,
j_1, i_2, i_3, i_1 \rangle$. Next, it starts from $j_3$. The
qualified constraints leaving $j_3$ are $c_{j_3j_2}$ only. So, the
immediate reachable variables through qualified constraints are $L =
\{j_2\}$. Take and delete $j_2$ from $L$.
Substitute $j_2$ by $j_3$
in constraints $c_{j_2i_2}$ and $c_{j_2j_1}$. As a result,
constraints $c_{j_2i_2}$ and $c_{j_2j_1}$ are removed from $C$ while
$c_{j_3j_1} = c_{j_3j_1} \cap (c_{j_2j_1} \circ c_{j_3j_2})$ and new
constraint $c_{j_3i_2} = c_{j_2i_2} \circ c_{j_3j_2}$ is introduced
to $C$. One can verify that both $c_{j_2j_1}$ and $c_{j_2i_2}$ are
qualified. Hence, variables $j_1$ and $i_2$ are reachable from $j_3$
and thus are put into $L$. Assume $j_1$ is selected from $L$. Since
there are no other constraints on $j_1$, nothing is done. Variable
$i_2$ is then selected from $L$. By eliminating $i_2$ using $j_3$,
$c_{i_2i_1}$ and $c_{i_2i_3}$ are removed from $C$ and $c_{j_3i_1}$
and $c_{j_3i_3}$ are added to $C$. Constraint $c_{i_2i_1}$ is
qualified, and thus $i_1$ is added to $L$. Note that $c_{i_2i_3}$ is
not qualified because it is not functional on $i_3$ in terms of the
graph. We take out the only variable $i_1$ in $L$. After $i_1$ is
eliminated using $j_3$, $c_{i_1i_3}$ is removed from $C$, and
constraint $c_{j_3i_3}$ is updated to be $c_{j_3i_3} \cap
(c_{i_1i_3} \circ c_{j_3i_1})$. Since $c_{i_1i_3}$ is qualified,
$i_3$ is added to $L$. One can see that although $i_3$ was not
reachable when $i_2$ was eliminated, it finally becomes reachable
because of $i_1$.
All the variables in a strongly connected component are reachable
from the variable under processing if one of them is reachable. Now,
take $i_3$ out of $L$, and nothing is done because there are no
other constraints incident on it. Every variable except $j_3$ is
marked as \done\ (line 16), the {\bf while} loop on $O$ (line 4 and
6) terminates.

\begin{theorem} \label{th:main}
Given a CSP problem, the worst case time complexity of \\
\algoName{Linear\_Elimination} is $O(ed^2)$ where $e$ is the number
of constraints and $d$ the size of the maximum domain in the problem.
\end{theorem}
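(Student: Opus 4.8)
The plan is to establish the $O(ed^2)$ bound by a careful amortized accounting of the total work performed across all substitutions. The key insight I want to exploit is that each individual substitution of a variable $j$ by the processing variable $i$ in a constraint $c_{jk}$ costs $O(d^2)$ (by the composition cost analysis already given in the proof of Theorem~\ref{th:basic}), so the entire complexity reduces to bounding the \emph{total number of substitution operations} by $O(e)$. If I can show that summed over the whole run the algorithm performs only $O(e)$ compositions, then multiplying by the $O(d^2)$ per-composition cost yields the claimed $O(ed^2)$.

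First I would argue that the \textbf{functional elimination ordering} guarantees each variable is substituted (eliminated) at most once overall. When a variable $i$ is taken from $O$ and is not yet marked \done, every variable $j$ reachable from $i$ through qualified constraints gets eliminated using $i$ and then marked \done\ (line~16); once marked, the outer \textbf{if} at line~6 prevents it from ever being processed as a center again. The ordering property --- that $i$ before $j$ implies a path from $i$ to $j$ --- is what ensures that a variable used as a substitution center is never itself eliminated later by some other center, so the redundant recomposition illustrated in the example on Figure~\ref{fig:funElimOrder}(a) cannot occur. Thus each of the $n$ variables is the ``$j$'' being eliminated in at most one execution of the inner loop body.

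Next I would bound the number of constraints touched. When variable $j$ is eliminated using center $i$ (the \textbf{for} loop at line~11), the work is proportional to the number of constraints $c_{jk}$ currently incident on $j$, and each such $c_{jk}$ is either discarded or transformed into a constraint $c'_{ik}$ on the center $i$. The crucial bookkeeping step is to charge each composition to a distinct constraint edge and show that edges are not regenerated: because the intersection $c'_{ik} = c_{jk}\circ c_{ij}\cap c_{ik}$ keeps at most one constraint per pair of variables, and because eliminated variables never reacquire constraints, the constraints migrate toward center variables without blowing up the edge count beyond the original $e$ (plus the bounded number created by composition, each replacing a destroyed one). Summing the per-elimination costs over all eliminated variables therefore telescopes to $O(e)$ compositions in total.

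The hard part will be making the amortization rigorous, in particular verifying that migrating a constraint $c_{jk}$ onto the center $i$ as $c'_{ik}$ does not later cause that same conceptual edge to be composed a second time under a \emph{different} center --- this is exactly what the functional elimination ordering is designed to prevent, but the argument must handle strongly connected components carefully, since within a component several variables share one center and the ``reachable'' set $L$ grows dynamically (as the worked example with $i_3$ becoming reachable only after $i_1$ shows). I would need to confirm that even in this dynamic growth, each constraint is consumed by exactly one composition across the algorithm's entire execution, so that the total is genuinely linear in $e$. Combining the $O(e)$ total compositions with the $O(d^2)$ cost per composition gives the worst case time complexity $O(ed^2)$.
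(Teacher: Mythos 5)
Your overall strategy is the same as the paper's: charge $O(d^2)$ per composition and reduce the theorem to bounding the total number of compositions by $O(e)$. But the invariant you propose to establish --- that ``each constraint is consumed by exactly one composition across the algorithm's entire execution'' --- is false, and it is not a technicality you can patch with more careful bookkeeping. A single original constraint $c_{jk}$ can be composed \emph{twice}: once when $j$ is eliminated using some center $i$ (turning it into $c'_{ik}$), and again later when $k$ is eliminated using some center $l$ (turning $c'_{ik}$ into a constraint on $\{i,l\}$). Both endpoints of an edge can be eliminated variables, so the ``conceptual edge'' you track migrates at both ends; charging each composition to a distinct edge therefore undercounts. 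The paper handles this by attaching a persistent identification number to each constraint and proving that each identifier undergoes \emph{at most two} substitutions, which still gives $O(e)$ total compositions but by a different count than yours.

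The larger gap is that the step you explicitly defer (``the hard part will be making the amortization rigorous'') is the entire content of the theorem. What caps the count at two is the lemma that once a variable $i$ has served as the substitution center for an endpoint of a constraint, $i$ itself is never substituted later. Proving this requires exactly the case analysis the paper performs: when $i$ is selected at line 5 and is not yet eliminated, either no current constraint $c_{mi}$ has $c^0_{mi}$ functional on $i$ (so $i$ can never be eliminated), or such a constraint exists, in which case the functional elimination ordering forces $i$ to lie in a not-yet-eliminated strongly connected component whose members $i$ absorbs during its own inner loop, after which no qualified constraint into $i$ remains. Without this argument, nothing rules out a chain in which the same constraint is re-composed under a succession of different centers, and the $O(e)$ bound collapses. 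So your proposal identifies the right decomposition but neither states the correct invariant nor supplies the argument that makes it true.
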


\begin{proof}
To find a functional elimination ordering involves the
identification of strongly connected components and topological
sorting. Each of the two operations takes linear time. Therefore,
line 1 of the algorithm takes $\bigO(n+e)$.

The {\bf while} loop of line 4 takes $\bigO(ed^2)$. Assume that
there is a unique identification number associated with each
constraint in $C$. After some variable of a constraint is
substituted, the constraint's identification number refers to the
new constraint.

For any identification number $\alpha$, let its first associated
constraint be $c_{jk}$. Assuming $j$ is substituted by some other
variable $i$, we can show that $i$ will be never be substituted
later in the algorithm. By the algorithm, $i$ is selected at line 5.
Since it is the first element of $O$ now, all variables before $i$
in the original functional ordering have been processed. Since $i$
is not eliminated, it is not reachable from any variable before it
(in terms of the original $O$) through qualified constraints (due to
the loop of line 9). Hence, there are two cases: 1) there is no
constraint $c_{mi}$ of $C$ such that $c_{mi}^0$ is functional on
$i$, 2) there is at least one constraint $c_{mi}$ of $C$ such that
$c_{mi}^0$ is functional on $i$. In the first case, our algorithm
will never substitute $i$ by any other variable. By definition of
functional elimination ordering, case 2 implies that $i$ belongs to
a strongly connected component whose variables have not been
eliminated yet. Since all variables in the component will be
substituted by $i$, after the loop of line 9, there is no constraint
$c_{mi}$ of $C$ such that $c_{mi}^0$ is functional on $i$. Hence,
$i$ will never be substituted again.

In a similar fashion, if variable $k$ is substituted by $l$, $l$
will never be substituted later by the algorithm.

So, there are at most two substitutions occurring to $\alpha$. Each
of these substitutions is a composition that involves a functional
constraint. Hence, its complexity is $O(d^2)$ in the worst case as
shown in the proof of Theorem~\ref{th:basic}.

Since there is a unique identification number for each constraint,
the total number of the unique identification numbers is $e$ and
thus the time taken by the {\bf while} loop at line 4 is $O(ed^2)$.
In summary, the worst case time complexity of the algorithm is
$O(ed^2)$. 
\end{proof}

Before proving some properties of \algoName{Linear\_Elimination}, we
first define trivially functional constraints.

\begin{definition}
Given a problem $P$, let $C^0$ be the constraints before applying
\algoName{Linear\_Elimination} and $C$ the constraints of the problem
at any moment during the algorithm. A constraint $c_{ij}$ of $C$ is
{\em trivially functional} if it is functional and satisfies the
condition: $c_{ij}^0$ is functional or there is a path $i_1 (=i),
i_2, \cdots, i_m (=j)$ in $C_0$ such that, $\forall k \in 1..m-1$,
$c_{i_k i_{k+1}}^0$ is functional on $i_{k+1}$.
\end{definition}

\begin{theorem}
Algorithm \algoName{Linear\_Elimination} transforms a CSP $(N,D,C)$
into a canonical functional form if all newly produced functional
constraints (due to substitution) are trivially functional.
\end{theorem}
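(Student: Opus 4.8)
The plan is to mirror the structure of the proof of Theorem~\ref{th:basic}: fix an arbitrary constraint $c_{ij}$ in the final constraint set $C'$ that is functional on $j$, and verify the two defining conditions of the canonical functional form by a case analysis on whether $j$ has been marked \emph{eliminated} when the algorithm halts. The hypothesis of the theorem is used at the outset. An original functional constraint is trivially functional by definition, and by assumption every newly produced functional constraint is trivially functional as well; hence \emph{every} functional constraint surviving in $C'$ is trivially functional. So for our chosen $c_{ij}$ there is a path $i = i_1, i_2, \ldots, i_m = j$ in $C^0$ with each $c^0_{i_k i_{k+1}}$ functional on $i_{k+1}$, and in particular $j$ is reachable from $i$ in $P^F$.

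First I would isolate the invariant that drives both cases: once a variable $j$ is marked eliminated by a processing (center) variable $i$, the only constraint remaining on $j$ is $c_{ij}$, it is functional on $j$, and this situation persists until termination. That $c_{ij}$ is functional on $j$ is forced by Definition~\ref{def:substitution}, since line~11 can substitute $j$ by $i$ only when $c_{ij}$ is functional on $j$; that $c_{ij}$ becomes the unique constraint on $j$ is immediate from line~11, which removes every $c_{jk}$ with $k \neq i$. Persistence follows from the fact, already established in the proof of Theorem~\ref{th:main}, that a center variable $i$ is never itself eliminated afterwards: hence $c_{ij}$ is never rewritten, and no fresh constraint can be created on $j$, since any such constraint would have to be produced while eliminating some variable $m$ adjacent to $j$, forcing $m = i$, which cannot be eliminated.

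With the invariant in hand, Case~1 ($j$ eliminated) is immediate: $c_{ij}$ is the unique constraint on $j$, so condition~2 of the canonical form holds, and condition~1 holds vacuously. For Case~2 ($j$ not eliminated) I would argue that $i$ must be eliminated. Suppose not; then when $i$ is taken from the ordering $O$ at line~5 it is processed, and the loop of line~9 eliminates every variable reachable from $i$ through qualified constraints. Using the $C^0$-functional path $i_1, \ldots, i_m$ supplied by trivial functionality, $j$ is reachable from $i$ through constraints functional in $C^0$, so $j$ would be placed in $L$ and eliminated, contradicting that $j$ is not eliminated. Therefore $i$ is eliminated, and by the invariant its unique remaining constraint connects it to its own center and is functional on $i$. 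Since $c_{ij}$ is a constraint on $i$, that center must be $j$, and $c_{ij}$ is therefore functional on $i$ as well, that is, $c_{ij}$ is bi-functional; moreover $i$ has no other constraint. This is exactly condition~1 of the canonical functional form, completing the case analysis.

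The step I expect to be the genuine obstacle is the reachability claim inside Case~2: that when an un-eliminated center $i$ is processed, the $L$-propagation of lines~9--15 really does capture \emph{every} variable $P^F$-reachable from $i$, and in particular $j$, even though the constraints along the $C^0$-functional path may already have been rewritten by earlier substitutions. Making this precise requires the same machinery as Theorem~\ref{th:main}: the functional elimination ordering guarantees that no variable reachable from $i$ has been eliminated by an earlier center (otherwise $i$ itself would have been reached and eliminated), and that an entire strongly connected component is pulled into $L$ once any of its members is, so that rewriting within a component never severs reachability. The trivially-functional hypothesis is what ties this together: it certifies that the surviving functional constraint $c_{ij}$ corresponds to an actual functional path in $P^F$, rather than to an accidental functionality created by intersection or composition, which is precisely what licenses the reachability argument and hence the reduction to the bi-functional case.
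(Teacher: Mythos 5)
The paper itself omits the proof of this theorem (``straightforward and thus omitted''), so there is no official argument to compare against; your overall architecture --- the invariant on eliminated variables, Case~1, and the reduction of Case~2 to showing that $i$ must be eliminated by $j$ --- is sound and is almost certainly what the authors had in mind. The invariant itself is correctly justified: a variable enters some $L$ only with a current constraint to its center that is a subset of an originally functional constraint or of a composition through one, hence functional; line~11 strips all its other constraints; and persistence follows because its only remaining neighbour is a center, which is never re-selected and never eliminated.

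The genuine gap is in your justification of the Case~2 reachability claim. You assert that if $i$ is never eliminated then, when $i$ is processed, $j$ lands in $i$'s own $L$, and you defend this with the parenthetical ``no variable reachable from $i$ has been eliminated by an earlier center (otherwise $i$ itself would have been reached and eliminated).'' That implication is false: reachability in $P^F$ is not symmetric, so an earlier center $c$ can eliminate an intermediate vertex $i_k$ of the path $i=i_1\to\cdots\to i_m=j$ without $i$ being reachable from $c$ (take $P^F$ with edges $c\to i_k$ and $i\to i_k$). When that happens the constraint $c_{i_{k-1}i_k}$ is replaced by a constraint incident to $c$, the path of \emph{qualified} constraints out of $i$ is severed, and $j$ need never enter $i$'s $L$. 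The theorem is still safe, but the correct argument is different: one shows that $j$ is eliminated by \emph{some} center, not necessarily by $i$. Concretely, for each edge $u\to v$ of $P^F$ the constraint $c_{uv}$ survives until the first moment $u$ or $v$ is processed, and at that moment $v$ is pushed into the acting center's $L$ unless $v$ is that center itself; starting from the last edge $i_{m-1}\to j$ this forces $j$ to be the center eliminating $i_{m-1}$, and a backward induction along the path --- using the facts that any vertex not yet processed when $j$ acts must lie after $j$ in $O$, hence (having a path into $j$'s component) must lie in $j$'s strongly connected component and so be swallowed by $j$ --- shows every $i_k$ with $k\ge 2$ is eliminated by $j$, and finally that $i_1=i$ itself would be eliminated by $j$, the desired contradiction. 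So the conclusion of Case~2 stands, but the step you flagged as the obstacle cannot be discharged by the symmetry-of-reachability argument you propose; it needs this edge-by-edge propagation argument instead.
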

The proof of this result is straightforward and thus omitted here.

\begin{corollary}
For a CSP problem with non-functional constraints and bi-func\-tional
constraints, the worst case time complexity of algorithm
\algoName{Linear\_Elimination} is linear to the problem size.
\end{corollary}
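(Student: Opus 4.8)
The plan is to reuse the complexity analysis of Theorem~\ref{th:main} essentially unchanged and merely sharpen the estimate for the cost of a single substitution. Recall that the $O(ed^2)$ bound was obtained by observing that each constraint, tracked through the algorithm by a fixed identification number, is substituted at most twice, and by charging $O(d^2)$ to each such substitution. I would keep the ``at most two substitutions per identification number'' part verbatim, and replace only the $O(d^2)$ per-substitution estimate by a bound that is linear in the size of the constraint actually being substituted.

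First I would check that every functional constraint arising during the run of \algoName{Linear\_Elimination} --- in particular every constraint $c_{ij}$ used to drive a substitution in line~11 --- is bi-functional. By hypothesis the original functional constraints are bi-functional, and the only ways new functional constraints are produced are composition and intersection. Both preserve bi-functionality: the composition of two partial bijections is again a partial bijection, and any sub-relation of a bi-functional constraint (hence any intersection involving one) is still bi-functional. Thus the substituting constraint $c_{ij}$ is always bi-functional, so in addition to the constant-time access from $a \in D_i$ to its support in $D_j$, we also have constant-time access from any $b \in D_j$ to the unique $a \in D_i$ with $(a,b) \in c_{ij}$.

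The crux is then to show that substituting $j$ by $i$ in a constraint $c_{jk}$ using a bi-functional $c_{ij}$ costs only $O(|c_{jk}|)$. Rather than scanning all of $D_i$, I would iterate over the tuples $(b,c)$ of $c_{jk}$; using the constant-time map $b \mapsto a$ of $c_{ij}$, each tuple contributes at most one tuple $(a,c)$ to $c_{jk} \circ c_{ij}$. This computes the composition in $O(|c_{jk}|)$ time and shows $|c_{jk} \circ c_{ij}| \le |c_{jk}|$. The subsequent intersection with any existing $c_{ik}$ tests membership of each of these $\le |c_{jk}|$ tuples in $c_{ik}$ (constant time per test), so it also costs $O(|c_{jk}|)$ and cannot enlarge the constraint. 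Hence every constraint is non-increasing in size throughout the algorithm, and at the moment of any substitution the substituted constraint is no larger than its initial size.

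Finally I would sum the costs. Charging the $O(|c_{jk}|)$ cost of each substitution to the identification number of the substituted constraint $c_{jk}$, and using that this number is substituted at most twice (Theorem~\ref{th:main}) while never exceeding its initial size, the total work of the while loop is $O(\sum_{c \in C^0} |c|)$; adding the $O(n+e)$ cost of computing the functional elimination ordering in line~1 keeps the bound linear in the problem size $O(n + \sum_{c \in C^0} |c|)$. The step I expect to be the main obstacle is precisely this amortized accounting: when the intersection merges the composed constraint into a pre-existing $c_{ik}$, I must argue that the cost is charged to exactly one identification number's initial size and that no such budget is billed more than a constant number of times. The non-increasing-size observation is what makes the charging sound, but pinning it down cleanly against the merging behaviour of the elimination step is the delicate point.
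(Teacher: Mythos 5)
Your proposal is correct and follows essentially the same route as the paper, which argues exactly that composition with a bi-functional constraint costs time linear in the constraints involved and then reuses the ``at most two substitutions per constraint'' accounting from Theorem~\ref{th:main}. The paper states this only as a brief observation; your additional checks (closure of bi-functionality under composition and intersection, and the non-increasing constraint sizes that make the amortized charging sound) are the right details to fill in and do not change the argument.
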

This result follows the observation below. When the functional
constraint involved in a substitution is bi-functional, the
complexity of the composition is linear to the constraints involved.
From the proof of Theorem~\ref{th:main}, the complexity of the
algorithm is linear to the size of all constraints, i.e., the
problem size.

\begin{corollary}
Consider a CSP with both functional and non-functional constraints.
If there is a variable of the problem such that every variable of
the CSP is reachable from it in $P^F$, the satisfiability of the
problem can be decided in $\bigO(ed^2)$ using
\algoName{Linear\_Elim\-ination}.
\end{corollary}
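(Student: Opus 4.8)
The plan is to run \algoName{Linear\_Elimination} on the problem and argue that the reachability hypothesis forces the algorithm to collapse the entire CSP onto a single surviving variable, after which deciding satisfiability is a trivial domain check. The $\bigO(ed^2)$ bound for the elimination phase is already given by Theorem~\ref{th:main}, and each substitution preserves the solution space by Property~\ref{prop:substitution} (hence Corollary~\ref{prop:elimination}), so the CSP produced by the algorithm is equivalent to the input. What remains is to identify the structure of that output and to bound the cost of the final check.

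First I would locate the free variable. Let $r$ be the variable from which every other variable is reachable in $P^F$. Since $r$ reaches every other strongly connected component, each such component has an incoming edge in the graph obtained by taking every component as one vertex, so the component of $r$ is the unique source of that contracted graph; as the first vertex of any topological ordering must be a source, this component is listed first. Because the functional elimination ordering $O$ is obtained by expanding the topological order into sequences of component vertices, the first variable $f$ taken at line 5 lies in $r$'s component. Consequently $f$ reaches $r$ (same component) and hence reaches every variable of the CSP in $P^F$.

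The crux of the argument, and the step I expect to be the main obstacle, is to show that processing this first variable $f$ eliminates every other variable. I would prove, by induction on the length of a functional path $f = u_0 \to u_1 \to \cdots \to u_m = v$ in $P^F$, that each $v \neq f$ enters $L$ and is marked \emph{eliminated} during the single outer-loop iteration that processes $f$. The delicate point is that adding $u_m$ to $L$ at line 15 requires the constraint between $u_{m-1}$ and $u_m$ to still be present in the current $C$ and still functional when $u_{m-1}$ is eliminated. This holds because the original constraint $c^0_{u_{m-1}u_m}$ is functional (hence \emph{qualified}), intersection with composed constraints preserves functionality, and such a constraint can only leave $C$ when one of its endpoints is eliminated; if $u_m$ was already eliminated we are done, and otherwise the (possibly intersected) functional constraint on $u_{m-1}$ and $u_m$ is exactly the one inspected at line 14. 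Since every variable other than $f$ is reachable from $f$ in $P^F$, all of them are eliminated, and no later outer-loop iteration does anything because every remaining variable is already marked.

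Finally I would read off satisfiability from the residual problem. After $f$ has eliminated all other variables, every surviving constraint joins $f$ to an eliminated variable: any constraint between two eliminated variables was substituted onto $f$ when the first of them was removed, and the back-constraint to $f$ is the one excluded at line 11. Thus the output is a star centered at $f$ with at most $n-1$ constraints, and by equivalence it is satisfiable iff some value $a \in D_f$ has a support under every constraint $c_{fk}$; as the eliminated variables share no constraints among themselves, such supports can be chosen independently. Deciding this is a single revision pass of $D_f$ against each of the at most $n-1$ constraints at cost $\bigO(d^2)$ each, i.e.\ $\bigO(nd^2)$. Reaching all $n$ variables from $r$ needs at least $n-1$ functional constraints, so $e \geq n-1$ and this check is $\bigO(ed^2)$; added to the $\bigO(ed^2)$ elimination cost, the total is $\bigO(ed^2)$.
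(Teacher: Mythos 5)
Your proposal is correct and follows the same route as the paper: run \algoName{Linear\_Elimination}, observe that the reachability hypothesis collapses the problem to a star graph centered on a single free variable, and decide satisfiability by checking that variable's domain, with the $\bigO(ed^2)$ bound inherited from Theorem~\ref{th:main}. The paper only sketches this (it says the canonical form ``becomes a star graph'' and appeals to arc consistency enforcement during elimination), whereas you supply the missing details --- that the first variable processed lies in the root's strongly connected component, that a single outer-loop iteration eliminates everything by induction on functional paths, and that a final $\bigO(nd^2)$ revision pass suffices --- all of which check out.
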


For a problem with the property given in the corollary, its
canonical functional form becomes a star graph. So, any value in the
domain of the free variable is extensible to a solution if we add
(arc) consistency enforcing during \algoName{Linear\_Elimination}.
The problem is not satisfiable if a domain becomes empty during the
elimination process.

\section{Experimental Results}
\label{sec:exp}

We experiment to investigate the effectiveness of variable elimination on
problem solving. In our experiments, a problem is solved in two
ways: (i) directly by a general solver; and (ii) variable elimination is
applied before the solver.

There are no publicly available benchmarks on functional
constraints. We test the algorithms on random problems which are
sufficiently hard so that we can investigate the effect of different numbers
of functional constraints and the effect of constraint tightness.

We generate random problems $\langle n, d, e, nf, t \rangle$ where
$n$ is the number of variables, $d$ domain size, $e$ the number of
constraints, {\em nf} the number of functional constraints, and $t$
the tightness of non-functional constraints. The tightness $r$ is
defined as the percentage of allowed tuples over $d^2$.
There are {\em nf} functional constraints and the rest
of the binary constraints are non-functional.
Each functional constraint is constructed to have $d$
allowed tuples.
In the context of random problems, the tightness factor of $1/d$ due
to the functional constraints is rather tight. When we increase {\em
nf}, it can be the case that the search space is quickly reduced due
to the effect of these very tight constraints. Therefore the
``hardness'' of the problems drops correspondingly when there is a
significant increase of $nf$. As described below, we try to counter
this effect by removing problems which be solved too easily from the
benchmarks.

In the experiments, we systematically test benchmark problems
generated using the following parameters: $n,d$ are $50$, $e$ varies
from $100$ to $710$ with step size $122$ (710 is $\sim 10\%$ of the
total number of possible constraints (1225)), {\em nf} varies from
$2$ to $12$, and $t$ varies from $0.2$ to $1.0$ with step size
$0.05$. When {\em nf} is small (for example, 2), there are so many hard
problems that we can only experiment with a small portion of the
problems because it is computationally infeasible. When {\em nf} is
large (for example, 12), even the most difficult problem instances from the
set of instances becomes easy and only a small number of backtracks
is needed. These instances can be solved too easily and thus are not
very useful for benchmarking the elimination algorithm (i.e., we
will not expect the elimination algorithm to bring any benefits to
these instances). So, we do not include the cases with $nf > 12$.
When $nf=12$, the most difficult problems we found are with $e=710$.
Table \ref{tab:tightness} shows the hardness of problems instances,
with $nf=12$ and $e=710$, in terms of the number of backtracks \#bt
(average of 10 instances) needed. The hardness is measured using an
arc consistency solver without using the elimination algorithm. When
$t$ is from $0.2$ to $0.65$, the problems are too easy (\#bt is 0).
For the most difficult case of $t$ being $0.8$, \#bt is still rather
small (around $1000$). On the other hand, when {\em nf} is small,
one can expect that the application of elimination may not make much
difference. Therefore, we do not include the cases when {\em nf} is
small either.

\begin{table}
      \begin{tabular}{rrrrrrr}
        \hline\hline
        tightness ($t$) & 0.2 -- 0.65 & 0.7 & 0.75& 0.8 & 0.85 & 0.9 -- 0.95\\
        \hline
        \#bt & 0 & 5.7 & 22.9 & 1023 & 0.2 & 0 \\
        \hline\hline
      \end{tabular}
\caption{Hardness of the problem versus tightness}
\label{tab:tightness}
\end{table}

Due to the observations above, we evaluate the algorithm only on
non-trivial problem instances and where $nf$ is not too tiny.
For each {\em nf} (varying from $6$ to $12$), the results of the most
difficult problem instances discovered in the exploration process
above is shown in Table~\ref{tbl:random}. The results were obtained
on a DELL PowerEdge 1850 (two 3.6GHz Intel Xeon CPUs) with Linux. We
implement both the elimination algorithm and a general solver in C++.
The solver uses the standard backtracking algorithm armed with
arc consistency enforcing algorithm after each variable assignment.
During the search, the dom/deg heuristic is used to select a variable,
and the value selection heuristic is in lexicographical order.\footnote{
The dom/deg heuristic is a dynamic variable selection heuristic.
}

\begin{table}
\caption{
The experimental results for random problems with $n=d=50$.}
\begin{center}
\begin{tabular}{rrrrrrr} \hline\hline
 & &  & \multicolumn{2}{c}{Elimination} &
\multicolumn{2}{c}{No Elimination} \\ 
\raisebox{1.5ex}[0pt]{$e$} & \raisebox{1.5ex}[0pt]{{\em nf}} &
\raisebox{1.5ex}[0pt]{tightness} & cpu time (s) & \#backtracks & cpu
time (s) & \#backtracks \\ \hline
344 & 6 & 0.60 & 20042 & 1.889e+06 & 47781 & 5.381e+06 \\ 
466 & 7 & 0.70 & 9266 & 9.362e+05 & 35136 & 3.955e+06 \\ 
588 & 8 & 0.75 & 17922 & 1.635e+06 & 45464 & 4.386e+06 \\ 
588 & 9 & 0.75 & 10346 & 5.679e+05 & 21231 & 1.605e+06 \\ 
710 & 10 & 0.80 & 3039 & 2.244e+05 & 5771 & 5.146e+05 \\ 
710 & 11 & 0.80 & 481 & 26522 & 959.7 & 71258 \\ 
588 & 12 & 0.75 & 24 & 682 & 57.9 & 2960 \\ \hline\hline
\end{tabular}
\end{center}
\label{tbl:random}
\end{table}

In Table~\ref{tbl:random}, the cpu time is the total time of twenty
problem instances for a given combination of $e$, $nf$ and tightness,
and the number of backtracks are their average.
For the problem instances used in
Table~\ref{tbl:random}, the time to transform the instances into
their canonical forms is negligible compared to the time needed for
solving the instance. There are several reasons. First, the number of
constraints involved in the elimination is relatively small compared
to the total number of constraints in the problems. Second, the
algorithm is as efficient as the optimal general arc consistency
algorithm used in the solver.
Thirdly, the elimination is applied only once to reduce the problem
which can be done before the backtracking search, while the arc consistency
algorithm needs to be called at every step during the search, i.e.
roughly about the same as the number of backtracks.

The results show that the variable elimination can significantly
speed up the problem solving in terms of both cpu time and the
number of backtracks. It reduces the number of backtracks by two to
four times and also reduces the cpu time correspondingly.

The statistics (cpu time and number of backtracks) used in
Table~\ref{tbl:random} is for 20 problem instances
for each value of the selected parameters (each row in the table).
We notice that the hardness of these instances is not uniform,
i.e., some instances are significantly harder than the others.
To better visualize the performance of the algorithms,
we replot the same data from Table~\ref{tbl:random}
in Figure~\ref{fig:allinstances}.
Each data point in Figure~\ref{fig:allinstances},
represents an instance whose
$x$-coordinate is the number of backtracks with elimination applied
while its $y$-coordinate is that without elimination.
Both axis use a log scale.

\begin{figure}
\includegraphics{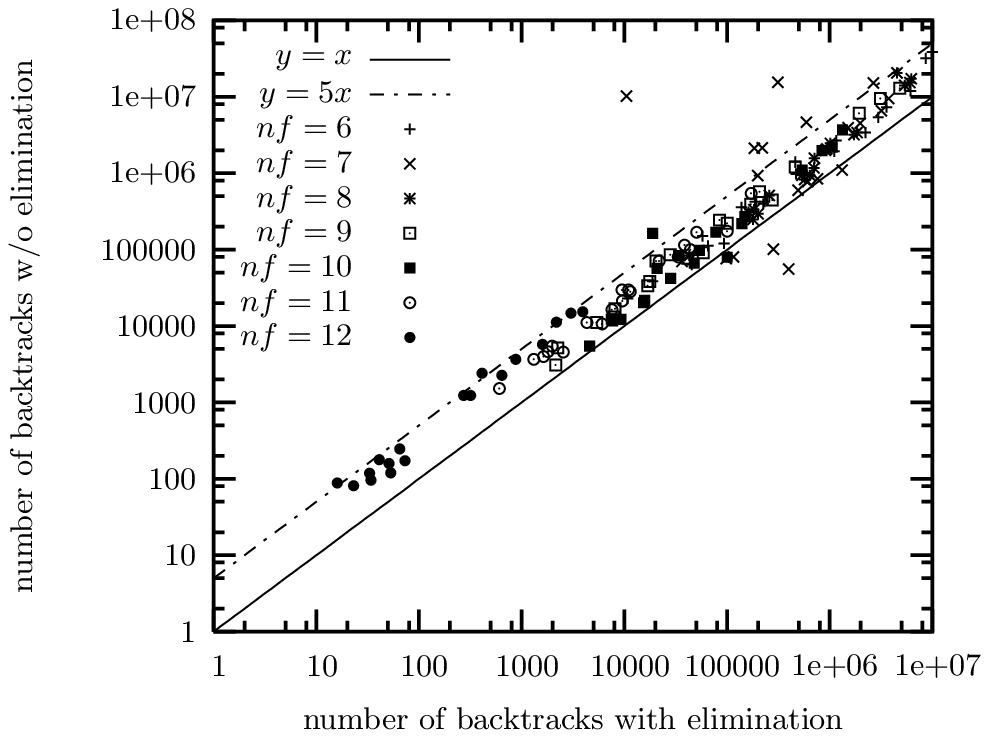}
    \caption{\label{fig:allinstances} The number of backtracks
    needed with and without elimination.}
\end{figure}

The scatter plot in Figure~\ref{fig:allinstances} shows a similar
performance improvement resulting from elimination.
Note that all points above the line $y = x$ indicate that not using
elimination requires more backtracks.
We highlight the instances with $nf=7$ (the $\times$ symbol in the graph),
which shows more extreme results as elimination can significantly
speed up or slow down the problem solving.
The slowing down is an interesting discovery of this experiment.
An explanation is that
variable elimination changes the topology of the problem, which may
affect the dynamic variable ordering heuristics of the general solver. It is
well known that the performance of a constraint solver may vary
significantly with a different variable ordering.

To have a better idea on how the elimination algorithm performs, we
look at instances with various hardness. We now zoom into the case
of $nf=8$ and $e=588$ --- this has a large cpu time and is more tight
from the the experiments in Table ~\ref{tbl:random}.
We remark that our algorithm performs similarly in all these cases, so
we just look at the details of a specific one with
the results for all configurations where the tightness changes from
$0.70$ to $0.80$ with a step of $0.01$.
The results are shown in Figure.~\ref{fig:588cpu}
(cpu time) and Figure~\ref{fig:588bks} (the number of backtracks).
Again, the cpu time is the sum of the cost of 20 instances per
parameter setting while the number of backtracks is their average.
When the tightness is $0.79$ and $0.80$ the problem instances become
very simple with less than $200$ backtracks, we do not expect the
elimination algorithm to improve the performance of the constraint
solver although it reduces the number of backtracks. For most non
trivial problems, elimination does help to improve the efficiency
(both cpu time and the number of backtracks) significantly. When
using 10 instances, we also observed that when the tightness is
$0.73$, the elimination leads to a worse performance of the general
problem solver in terms of both cpu time and the number of
backtracks.

\begin{figure}
\includegraphics{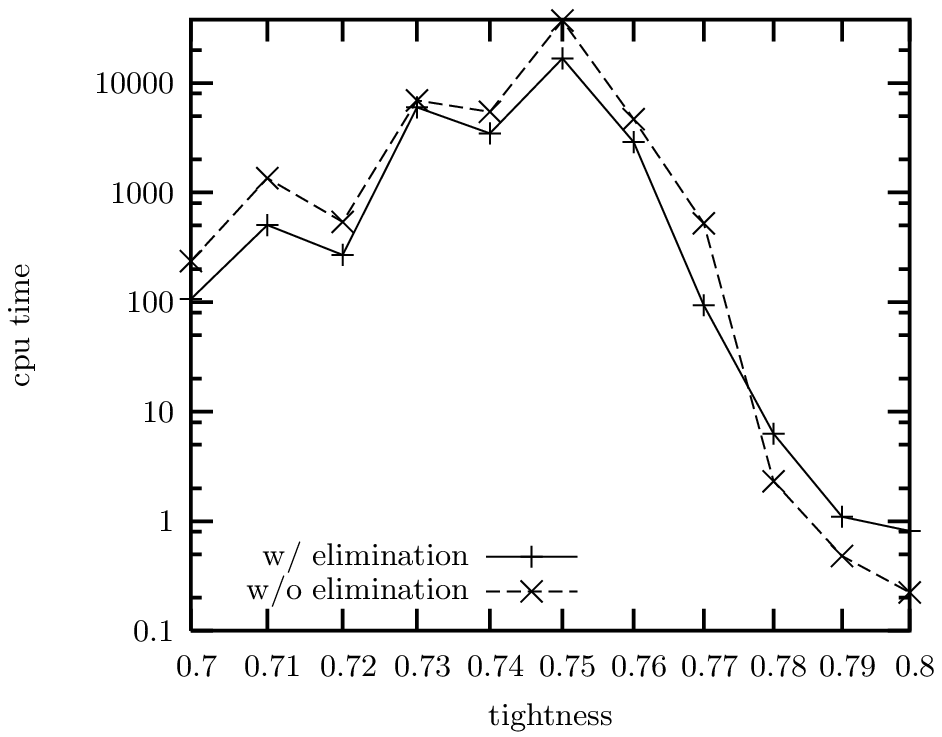}
    \caption{\label{fig:588cpu} The cpu time for instances with $n=d=50$, $nf=8$ and $e=588$ and tightness varying
    from $0.70$ to $0.80$. }
\end{figure}

\begin{figure}
\includegraphics{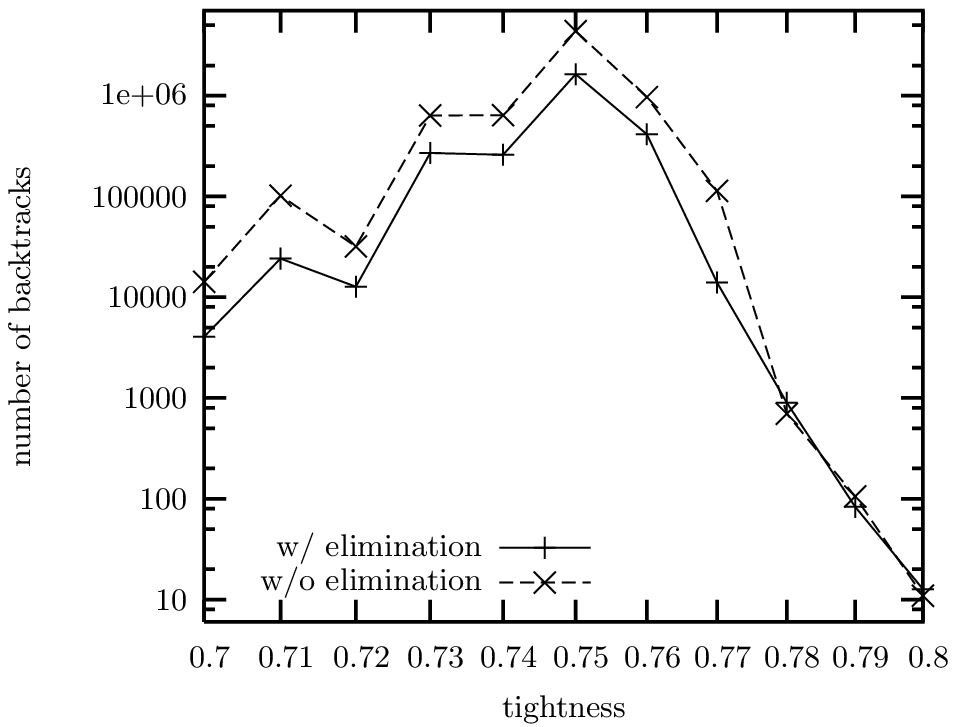}
    \caption{\label{fig:588bks} The number of backtracks for instances with $n=d=50$, $nf=8$ and $e=588$ and tightness varying
    from $0.70$ to $0.80$. }
\end{figure}

As observed, when the number of functional constraints in the random
problems increases, the problem instances become trivial (i.e., very
few backtracks are needed to solve them). This makes it hard to
fully evaluate the elimination algorithm. To reduce the potential
inconsistency caused by the tight functional constraints which
makes the problems easy, we use identity functions
(i.e., $x = y$, a special case of functional
constraint), instead of arbitrary functional constraints.
We remark that we could also have used a permutation form of the
identity function but that would have made the problem instance
construction more complex.
With identity functions, no inconsistency will result directly from
functional constraints. The experimental results given
in Table~\ref{tbl:identity} show that
we can create non trivial problems with much more functional constraints.

\begin{table}
\caption{ The experimental results for random problems with
identity functions (the problem parameters are $n=d=e=100$).}
\begin{center}
\begin{tabular}{rrrrrr} \hline\hline
 &  & \multicolumn{2}{c}{Elimination} &
\multicolumn{2}{c}{No Elimination} \\ 
\raisebox{1.5ex}[0pt]{{\em nf}} & \raisebox{1.5ex}[0pt]{tightness} &
cpu time (s) & \#backtracks & cpu time (s) & \#backtracks \\ \hline
10 & 0.04 & 11.5  &  779.6  &   34.6   & 1870.1 \\ 
20 & 0.04 &  1.4  &  4.5 &  9606.8 &  131724.0 \\ 
30 & 0.08 &  1.1 &  0.2 &  40.7 &  564.6 \\ 
40 & 0.19 & 1.1 &   0 &  177.9 &  1275.7 \\ 
50 & 0.20 &  0.7 &   0 &  893.1  &  64163.2\\ 
60 & 0.25 &  0.6 &   0 &  2.2 &  1.4 \\ \hline\hline
\end{tabular}
\end{center}
\label{tbl:identity}
\end{table}

For this set of instances, the elimination can speed up the problem
solving by up to several orders of magnitude (in terms of both cpu
time and the number of backtracks). An important observation is that
when there is a significant amount of functional constraints, the
number of backtracks needed after elimination can be as small as 0.
However, without elimination, the general problem solving may need
a large number of backtracks (up to five orders of
magnitude larger).

In summary, from our experiments, for non trivial random problems,
elimination can improve the efficiency of a general constraint
solver by several times to several orders of magnitude. We also
observe that the elimination could make a solver slower possibly due
to the change of the topological structure of the problems. However,
the slowdown only occurs rarely in our experiments. \\

\subsection{Search Space Reduction through Elimination}

The experiments show that the elimination could reduce the search
space significantly. In fact, we can show that the elimination can
help reduce the size of the search space (in terms of the current
domains of the variables).

\begin{proposition}
Given a CSP problem $P$, let $P'$ be the problem resulting from
applying the elimination algorithm to $P$. After enforcing arc
consistency on $P'$ and $P$, for each variable of $P'$, its domain
in $P'$ is a subset of that in $P$.
\end{proposition}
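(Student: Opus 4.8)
The plan is to lean on the greatest-fixpoint view of arc consistency and to reduce the whole elimination run to a single substitution step. For a CSP $R=(N,D,C)$ write $\mathrm{AC}(R)$ for its arc-consistent closure, i.e.\ the componentwise-largest family of subdomains $(E_v)_{v\in N}$ with $E_v\subseteq D_v$ such that every value has a support across every constraint of $C$. I will use two standard facts: (i) the arc-consistent subdomains of $R$ are closed under componentwise union, so $\mathrm{AC}(R)$ is the unique largest one, and any arc-consistent subdomain $E\subseteq D$ satisfies $E\subseteq\mathrm{AC}(R)$; and (ii) $\mathrm{AC}$ is monotone in the initial domains, so shrinking the starting domains can only shrink the closure. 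Since $P$ and $P'$ have the same variable set $N$ (the algorithm shrinks $L$, not $N$), it suffices to prove $\mathrm{AC}(P')_v\subseteq\mathrm{AC}(P)_v$ for every $v\in N$.

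First I would decompose the elimination into its atomic substitution steps in the sense of Definition~\ref{def:substitution}: line~1 of \algoName{Variable\_Elimination} eliminates a variable $j$ by substituting $j$ by $i$ in each $c_{jk}$ separately, keeping the functional constraint $c_{ij}$ untouched, and the domain revisions of line~3 only remove values. Writing $P=P_0,P_1,\dots,P_m=P'$ for the resulting chain (the substitutions first, then the revisions folded in as extra domain shrinkage), it is enough to prove the single-step claim: if $Q$ is obtained from a CSP $\tilde P$ by substituting $j$ by $i$ in one constraint $c_{jk}$ using a functional $c_{ij}$, then $\mathrm{AC}(Q)\subseteq\mathrm{AC}(\tilde P)$; transitivity then chains the steps and fact~(ii) absorbs the line~3 revisions. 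To establish the single-step claim I will show, using fact~(i), that the closure $E:=\mathrm{AC}(Q)$ is \emph{itself} arc-consistent for $\tilde P$; then $E\subseteq\mathrm{AC}(\tilde P)$ follows at once.

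The only constraints of $\tilde P$ not literally present in $Q$ are $c_{jk}$ (removed) and, if it existed, $c_{ik}$ (replaced by the tighter $c'_{ik}=c_{ik}\cap(c_{jk}\circ c_{ij})\subseteq c_{ik}$). For every shared constraint---including the untouched functional $c_{ij}$, which remains in $Q$---$E$ is arc-consistent because it is the closure of $Q$; and arc-consistency of $E$ for the tighter $c'_{ik}$ immediately yields arc-consistency for the looser $c_{ik}$. The real work, and the main obstacle, is showing $E$ is arc-consistent for the \emph{deleted} constraint $c_{jk}$. This is exactly where functionality of $c_{ij}$ is used: write $f$ for the partial function it induces. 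Given $b\in E_j$, arc-consistency of $E$ for $c_{ij}$ gives $a\in E_i$ with $b=f(a)$, and arc-consistency for $c'_{ik}$ gives $c\in E_k$ with $(a,c)\in c'_{ik}\subseteq c_{jk}\circ c_{ij}$; unfolding the composition and using that $c_{ij}$ is functional forces the intermediate value to be $f(a)=b$, so $(b,c)\in c_{jk}$ with $c\in E_k$. Conversely, given $c\in E_k$, arc-consistency for $c'_{ik}$ produces $a\in E_i$ with $(a,c)\in c_{jk}\circ c_{ij}$, whose intermediate value is again $f(a)$; arc-consistency of $E$ for $c_{ij}$ guarantees $f(a)\in E_j$, giving a support $f(a)$ for $c$ under $c_{jk}$. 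Hence $E$ is arc-consistent for $c_{jk}$ as well.

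Putting the pieces together, $E=\mathrm{AC}(Q)$ is arc-consistent for every constraint of $\tilde P$ and satisfies $E\subseteq D_{\tilde P}$ (substitution never enlarges domains), so fact~(i) gives $\mathrm{AC}(Q)\subseteq\mathrm{AC}(\tilde P)$. Chaining this over the substitution steps and applying fact~(ii) to the line~3 revisions yields $\mathrm{AC}(P')_v\subseteq\mathrm{AC}(P)_v$ for every $v$, as required. I expect the functional ``round trip'' above---recovering a support for the vanished $c_{jk}$ by going $j\to i$ through $c_{ij}$ and $i\to k$ through $c'_{ik}$---to be the delicate point, while the rest is routine greatest-fixpoint and monotonicity bookkeeping.
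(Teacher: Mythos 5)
Your proof is correct and follows essentially the same route as the paper's: both arguments exhibit the arc-consistent domains of the reduced problem as an arc-consistent subdomain of the original problem, invoke maximality of the arc-consistency closure, and use functionality of $c_{ij}$ in exactly the same way to recover supports for the deleted constraint $c_{jk}$ by routing through $i$ and the tightened $c'_{ik}$. The only difference is bookkeeping: you decompose the run into single-constraint substitutions and keep the eliminated variable together with $c_{ij}$ in the problem, whereas the paper eliminates a whole variable at a time, drops $j$ from the variable set, and must therefore reconstruct a suitable domain $D''_j$ by hand --- your version is marginally cleaner for it.
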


\begin{proof}
Instead of  proving the original proposition, we prove the following
claim: given a CSP problem $P=(V, D, C)$ and a constraint $c_{ij}$
functional on $j$,  let $P'=(V-\{j\}, D', C')$ be the problem
resulting from the elimination of $j$. Any value, not from $D_j$,
removed by enforcing arc consistency on $P$ will be removed by
enforcing arc consistency on $P'$. Let $P^1=(V, D^1, C)$ and
$P'^1=(V-\{j\}, D'^1, C')$ be the result of enforcing arc
consistency on $P$ and $P'$ respectively. Equivalently, we will show
that there exists $D''_j \subseteq D_j$ such that $P'' = (V, D'^1
\cup \{D''_j\}, C)$ (i.e., ``plug'' the domains of $P'^1$ to $P$) is
arc consistent. If this claim holds, the proposition holds by
applying the claim repeatedly (as the elimination proceeds).

Let the neighbors of $j$ in $P$ be $i, k_1, \ldots, k_m$. For any
constraint $c_{lk} \in C - \{c_{ij}, c_{jk_1},$ $\ldots, c_{jk_m}\}$,
$c_{lk} \in C'$. Since $c_{lk}$ is arc consistent in $P'^1$,
$c_{lk}$ is arc consistent in $P''$.

We next show $c_{ij} \in C$ is arc consistent with respect to
$D'^1_i$ and $D_j$, i.e., for any value $a \in D'^1_i$, there is a
support in $D_j$. Assume there is no support in $D_j$, by the
definition of substitution, $a$ has no support in any domain of
$k_n$ ($n \in 1..m$), which contradicts that $c_{ik_n} \in C'$ is
arc consistent with respect to $D'^1_i$ and $D'^1_{k_n}$.
Furthermore, let the support of $a$ in $D_j$ be $b$. We claim $b$
has a support with respect to any constraint $c_{jk}$ ($k \in \{k_1,
\ldots, k_m\}$) in $P'^1$. Otherwise, $a$ has no support with respect
to $c_{ik} \in C'$ for some $k \in \{k_1, \ldots, k_m\}$, by the
definition of substitution. It contradicts the fact that $c_{ik}$ is
arc consistent in $P'^1$. Let $D''_j = D_j - \{b ~|~ b \mbox{ has no
support with respect to $c_{ji}$ and } D'^1_i \}$. Clearly $D''_j$
is not empty (because $c_{ij}$ is arc consistent in $P'^1$). It can
be shown that $c_{ji}$ is arc consistent over domains $D^{'1}_i$ and
$D''_j$ and $c_{ij}$ is still arc consistent on these domains too.
In other words, $c_{ij}$ is arc consistent in $P''$.

Similarly, we can show that $c_{jk_1}$, \ldots, and $c_{jk_m}$ are arc
consistent over $D^{'1}_{k_n}$ ($n \in k_1 .. k_m$) and $D''_j$, and
thus in $P''$. 
\end{proof}

Furthermore, after enforcing arc consistency on $P$ and $P'$, for
some variables of $P'$, its domain in $P'$ is a proper subset of
that in $P$. Consider the following example: $V= \{x,y,z\}$, $x,y,z
\in \{1,2\}$, and $c_{xy} = \{(1,1), (2,2)\}$, $c_{yz} = \{(1,2),
(2,1)\}$, $c_{zx} = \{(1,1), (2,2)\}$. This problem is arc
consistent and the domains of the variables are $\{1,2\}$. However,
the problem $P'$ resulting from variable elimination has an empty
domain.

Thus, we see that using the elimination algorithm together with arc
consistency for the non-functional constraints leads to a ``higher
amount'' of consistency.

\section{Beyond Binary Constraints}
\label{sec:beyond}

Our presentation so far is based on binary constraints. To model
real life problems, non-binary constraints are often useful. 
In this section, we discuss the potential extension
of the work reported in this paper. The first subsection is to
generalize substitution to non-binary constraints in extensional
form, and the second proposes an approach to processing non-binary
constraints in intensional form.

\subsection{Variable Elimination and Non-binary Functional
Constraints} \label{sec:non-binarySubstitution}
\newcommand{\newC}[3]{c_{#1 \cup #2 - \{#3\}}}

In this section, we discuss the treatment of the generalization of
binary functional constraints and substitution to non-binary
constraints.
We first generalize the functional property from binary constraints to
non-binary constraints.
Then, we show how a variable is substituted by a
{\em set of variables}.

A non-binary constraint is denoted by $c_{S}$ where $S$ is the set
of variables in the constraint. 
A linear equation
$x+y+z=8$ with finite domains for $\{x, y, z\}$ is a non-binary
constraint. 
We now define variable instantiations.
An instantiation of a set of
variables $Y$ is an assignment of values to the variables in $Y$. It
is usually denoted by a sequence. An instantiation is 
denoted by a character with a bar, for example, $\bar{a}$.

\begin{definition}
A constraint $c_{S}$ is {\em functional on $j(\in S)$} if for any
instantiation $\bar{a}$ of $S-\{j\}$, there is at most one value of
$j$ such that this value and $\bar{a}$ satisfy $c_{S}$. A constraint
$c_S$ is {\em functional} if it is functional on some variable $j
\in S$.
\end{definition}

\begin{eg}
Consider a constraint $x+y+z=8$ with $x,y,z \in \{1,2,3\}$. Let
$\bar{a}=(1,1)$ and $\bar{b}=(2,3)$ be two instantiations of
$(x,y)$. For $\bar{a}$, no value for $z$ can be found to satisfy the
constraint. For $\bar{b}$, value $3$ is the only value for $z$ to
satisfy the constraint. It can be verified that the constraint is
functional on $z$, and similarly on $x$ and on $y$.
\end{eg}

\begin{eg}
The constraint $ x^2 + y^2 + z = 8$ with $x,y,z \in
\{-1, -2, -3, 0, 1, 2, 3\}$ is functional on $z$ but not functional
on $x$ or $y$.
\end{eg}

The idea of variable substitution is applicable to the functional
non-binary constraints defined above. However, we need more
generalised operations to implement variable substitution. Below, we
always take a constraint $c_S$ as a non-binary relation whose tuples
are given explicitly. It is also helpful to recall that a relation
is simply a set, and we can apply set operations like intersection
to relations. For example, the constraint $x+y+z=8$ with $x,y,z \in
\{1,2,3\}$ is taken as $\{(2,3,3), (3,2,3), (3,3,2)\}$ where each
tuple is an instantiation of variables $(x,y,z)$.

In the context of non-binary constraints, the composition of two
constraints $c_S$ and $c_T$ with respect to a variable $i \in S \cap
T$, denoted by ``$\circ_i$,'' is defined below.

\[
  \begin{array}{cl}
   c_S \circ_i c_T = \{\bar{a} ~|~ & \bar{a}\mbox{ is an instantiation of
   $S \cup T - \{i\}$ and there exists $a \in D_i$} \\
   & \mbox{such that $(\bar{a},a)$ satisfies both $c_S$ and $c_T$}.\}
  \end{array}
\]

If $c_S$ is functional on $j$ and $j$ is a variable of constraint
$c_T$, to substitute the variable $j$ in $c_T$ in terms of $c_S$ is
to replace $c_T$ by $c_S \circ_j c_T$.

\begin{definition}
Consider a CSP $(N,D,C)$ and two constraints $c_S$ and $c_T$ in $C$.
Assume $c_S$ is functional on $j \in S \cap T$. To {\em substitute
$j$ in constraint $c_T$ using $c_S$} is to get a new CSP $(N,D,C')$
where $C' = (C - \{c_T\}) \cup \{c'_{S \cup T - \{j\}}\}$ and $c'_{S
\cup T - \{j\}} = c_{S \cup T - \{j\}} \cap (c_S \circ_j c_T)$.
\end{definition}

The variable substitution preserves the solution of a CSP.

\begin{property} \label{prop:nonBinarySubstitution}
Given a CSP $(N,D,C)$, a constraint $c_S \in C$ functional on $j$,
and a constraint $c_T \in C$ where $j \in T$, the new problem
obtained after $j$ in $c_T$ is substituted using $c_S$ is equivalent
to $(N,D,C)$.
\end{property}

\begin{proof}
Let the new problem after $j$ in $c_T$ is substituted be
$(N, D, C')$ where $C' = (C - \{c_T\}) \cup \{c'_{S \cup T
-\{j\}}\}$ and $\newC{S}{T}{j}' = \newC{S}{T}{j} \cap (c_S \circ_j
c_T)$.

Assume $\bar{a}$ is a solution of ($N, D, C$). We shall show that
$\bar{a}$ also satisfies $C'$.

Given a set of variables $Y$, $\bar{a}_Y$ will be used to denote the
values in the solution $\bar{a}$ for the variables in $Y$. $C'$
differs from $C$ in that it has the new constraint
$\newC{S}{T}{j}'$. It is known that $\bar{a}_S \in c_S$, $\bar{a}_T
\in c_T$, and  if there is $\newC{S}{T}{j}$ in $C$, $\bar{a}_{S \cup
T - \{ j\}}$ satisfies $\newC{S}{T}{j}$. The fact that
$\newC{S}{T}{j}' = (c_S \circ_j c_T) \cap \newC{S}{T}{j}$ implies
$\bar{a}_{S \cup T - \{ j\}} \in \newC{S}{T}{j}' (\in C')$. Hence,
$\newC{S}{T}{j}'$ is satisfied by $\bar{a}$.

Conversely, we need to show that any solution $\bar{a}$ of ($N, D,
C'$) is a solution of ($N, D, C$). Given the difference between $C'$
and $C$, it is only necessary to show $\bar{a}$ satisfies $c_T$. To
facilitate the following proof, we write $\bar{a}_S$ as
$(\bar{a}_{S-\{j\}}, \bar{a}_j)$, $\bar{a}_T$ as
$(\bar{a}_{T-\{j\}}, \bar{a}_j)$. We have $(\bar{a}_{S-\{j\}},
\bar{a}_j) \in c_S$ and $\bar{a}_{S \cup T - \{ j\}} \in
\newC{S}{T}{j}'$. Assume, by contradiction, $(\bar{a}_{T-\{j\}},
\bar{a}_j) \notin c_T$.
Since $\bar{a}_{S \cup T - \{ j\}} \in \newC{S}{T}{j}'$, there must
exist $b \in D_{j}$ such that $b \neq \bar{a}_j$, $(\bar{a}_{T
-\{j\}}, b)$ satisfies $c_T$, and $(\bar{a}_{S -\{j\}}, b)$
satisfies $c_S$, contradicting that $c_S$ is functional on $j$. So,
$\bar{a}_T$, that is $(\bar{a}_{T-\{j\}}, \bar{a}_j)$, satisfies
$c_T$. 
\end{proof}

A CSP $(N,D,C)$ with non-binary functional constraints can be
reduced by variable substitution in a similar way as developed in
this paper. In the non-binary case, we note that the complexity of
the algorithm is more expensive due to the composition operation
(which is very close to the {\em join} operation in relational databases).

\subsection {Variable Elimination and Non-binary Constraints}
\label{sec:nonbinary}

Non-binary constraints such as arithmetic or global constraints are
common in CP systems. We discuss how variable elimination of
functional constraints can be applied to these constraints.
Non-binary constraints are either in extensional (defined
explicitly) or intensional (defined implicitly) form. To substitute
a variable in an extensional non-binary constraint, we can follow
the definition given the previous subsection.

In most existing CP systems, for intentional constraints, there are
usually particular propagators with a specific algorithm associated
with them. In this case, the approach using composition is not
directly applicable simply because it has to interact with a
constraint defined in terms of an arbitrary specific propagation
algorithm. We sketch below an approach which allows variable
elimination to be employed with generic propagators. Assume we have
a linear constraint $c_1$: $ax + by + cz < d$ and a constraint
$c_{wy}$ functional on $y$. To substitute $y$ in $c_1$, we simply
modify $c_1$ to be $ax + bw + cz < d$ and mark $w$ as a {\em shadow
variable} ($w$ needs special treatment by the propagator, which will
be clear later). We call $y$ the {\em shadowed} variable. Assume we
also have $c_{uw}$ functional on $w$. To eliminate $w$, $c_1$ is
further changed to $ax + bu + cz < d$. Since $w$ is a shadow
variable, we generate a new constraint $c_{uy}$ using $c_{uw}$ and
$c_{wy}$ in a standard way as discussed in this paper. Now $u$
becomes the shadow variable while the shadowed variable is still $y$
(variable $w$ is gone). Suppose we need to make $c_1$ arc
consistent. First ``synchronize the domains'' of $y$ and $u$ using
$c_{uy}$, i.e., enforce arc consistency on $c_{uy}$. (Note that due
to elimination, $c_{wy}$ and $c_{uw}$ are no longer involved in the
constraint solving.) Next, we enforce arc consistency on $c_1$.
During the process, since $u$ is a shadow variable, all domain
operations are on $y$ instead of $u$. After making $c_1$ arc
consistent, synchronize the domain of $y$ and $u$ again. (If the
domain of $u$ is changed, initiate constraint propagation on
constraints involving $u$.) This approach is rather generic: for any
intensional constraints, synchronize the domains of the shadow
variables and shadowed variables, apply whatever propagation methods
on the shadowed variables (and other non-shadow variables),
synchronize the domains of shadow variables and shadowed variables
again. In fact, the synchronization of the domains of the shadow and
shadowed variables (for example, $u$ and $y$ above) seems be readily
implementable using the concept of views \cite{SchulteT05}.

\section{Related Work}
\label{sec:related}

We now discuss variable substitution in the context of CLP followed
by related work in variable substitution algorithms from other
domains. Finally, the relationship to functional, bi-functional and
other variable elimination approaches in the CSP literature.

\subsection{CLP and Constraint Solving}

Logic Programming and CLP \cite{JM94} systems often make use of
variable substitution and elimination. The classic unification
algorithm discussed below is a good example.

A more complex example is CLP($\cal R$) \cite{JMSY92} which has
constraints on finite trees and arithmetic. Variables in arithmetic
constraints are substituted out using a parametric normal form which
is applied during unification and also when solving arithmetic
constraints. Our approach is compatible with such CLP solvers which
reduce the constraint store to a normal form using variable
substitution. We remark that any CLP language or system which has
finite domain constraints will deal with bi-functional 
constraints simply because of the need
to match an atom in the body with the head of a rule. The question
is how powerful is the approach used. In this paper, we show that a
variable substitution approach is more powerful than just simple
finite domain propagation on equations. The consistency of the CSP
is increased. Our experiments show that the time to solve the
problem can be significantly smaller.

\subsection{Unification, Gaussian Elimination, and Elimination
Algorithm for Functional Constraints}

The algorithm for unification of finite trees, the Gaussian
elimination algorithm for linear constraints and our algorithm for
functional constraints share the same key techniques: variable
substitution and elimination. Such algorithms are also commonly used
in CLP systems. We illustrate this by examples.

The first is the unification of finite trees or terms. Unifying two
terms $f(x,y,z)$ and $f(y, z, g(x))$, where $x,y$ and $z$ are
variables, results in three term equations: $x = y, y = z, z =
g(x)$. These equations can be solved using a variable elimination
method. We select a variable and eliminate it from the system by
substitution. For example, we can select to eliminate $x$ ---
substitute all $x$ by $y$ (using the first equation $x=y$), which
results in $y=z, z=(g(y))$. This process continues until some
constant symbols do not match in an equation, the left hand side
variable appears in a sub-term on the right hand side (or vice
versa), or no new equation can be produced and there is only one
term equation left.

Our second example is equation solving for arithmetic over the real
numbers. A system of binary linear constraints on real numbers can
be solved by the well known Gaussian elimination method. A major
step is to select a variable and eliminate it using an equation.
Specifically, to eliminate $x$ using $ax + by = c$, we substitute
all $x$ in the system by $(c-by)/a$.

Lastly, we look at the case of finite domain constraints considered
in this paper. Given a binary CSP, when we have a variable $x$ and a
general constraint $c_{yx}$ functional on $x$, variable $x$ will be
eliminated by substituting $x$ in the remainder of the constraints.
The substitution here is achieved by (general) composition of
relations. One can show that the substitution of $x$ in Gaussian
elimination produces an equation (a constraint) which is the result
of the composition of the involved equations (constraints). In other
words, given $c_{yx}: x = (c-by)/a$ and $c_{xz}: a_1x + b_1z = c_1$,
the equation $a_1(c-by)/a + b_1z = c_1$ is equal to the composition
of $c_{yx}$ and $c_{xz}$. Therefore, the elimination method proposed
in this paper can be regarded as a generalization of Gaussian
elimination for linear equations to functional constraints defined
over discrete domains.

To see the further similarity among these algorithms, let us examine
the impact of the variable elimination ordering on their efficiency.
As shown in the earlier sections, if all the constraints are known a
priori, we can find a good ordering to make the elimination
algorithm more efficient. The same principle applies to unification
algorithm. Consider the set of term equations $x = f(a,a), y=f(x,x),
z=f(y,y)$. Direct substitution using the ordering of $x,y,z$ is more
expensive than the ordering $z, y, x$. In fact, given a term
equation, it can be unified in linear time by finding a good
variable ordering \cite{PW78}.

In a CLP solver, the constraints are added to the constraint store
dynamically. If the newly added constraint is a binary linear
equation, it has been observed that one can improve the efficiency
by choosing properly a variable to eliminate from the two involved
in the equation \cite{BSTY95}. For example, a brand new variable is
preferred to an old one (a variable occurring previously in the
constraint store). With the new variable, no substitution is
necessary. For elimination using bi-functional constraints, the one
involved in a lesser number of earlier constraints will be
eliminated \cite{ZY02a}. In the case of unification, when there are
several variables that can be eliminated, we choose the one that is
involved in less number of constraints too (for example, \cite{EG88}). The
variable selection idea, together with the disjoint set data
structure and union-find algorithm, had led to almost linear
algorithms in \cite{EG88,ZY02a}.

\subsection{Functional Constraints and Variable Elimination in CSP}
\label{sec:relatedWork}

We now discuss other work related to functional constraints from a
CSP perspective.

Bi-functional constraints, a special case of functional constraints,
have been studied in the context of arc consistency (AC) algorithms
since Van Hentenryck et al. \cite{vHDT92} proposed 
A worst case optimal AC algorithm with $O(ed)$
was proposed in \cite{vHDT92}.
(In many of the papers, bi-functional
constraints were called functional constraints). The special
properties of bi-functional constraints were used to obtain the time
complexity better than that of the optimal AC algorithms such as
AC2001/3.1 ($\bigO{(ed^2)}$) \cite{BRYZ05} for arbitrary binary
constraints. 
A fast AC algorithm for a
special class of {\em increasing} bi-functional constraints
was also proposed in \cite{Liu95}. 
Here, our elimination algorithm solves the consistency of functional
constraints and variable substitution incorporates the effect of the
functional part of the problem into the rest of the non-functional
constraints. Thus, it gives a higher level of consistency as it
achieves global consistency for the functional constraints rather
than local consistency like arc consistency. At the same time, it
may simplify the remainder of the constraints, thus reducing the
problem further.

A new type of consistency, label-arc consistency, was introduced
in \cite{AB96} and they showed that
bi-functional constraints with limited extensions to other
constraints can be (globally) solved, but no detailed analysis of
their algorithms is given. In \cite{ZYJ99}, we proposed a variable
elimination method to solve {\em bi-functional} constraints in
$\bigO(ed)$. Bi-functional constraints also belong to the class of
``Zero/One/All'' constraints which was shown to be one of the earliest
classes of tractable constraints \cite{CCJ94}. The subclass of
``One'' constraints in the ``Zero/One/All'' class corresponds to
bi-functional constraints. What was not realized in
\cite{CCJ94,ZYJ99} was that because the concern was the tractability
of the class of Zero/One/All constraints, the importance of variable
substitution and class of functional constraints was missed. We also
point out that all the papers above deal with the special case of
bi-functional constraints rather than functional constraints.

David introduced {\em pivot consistency} for binary functional
constraints in \cite{Dav95}. Both pivot consistency and variable
substitution are different ways of reducing a CSP into a special
form. However, there are some important differences between pivot
consistency and variable substitution in this paper. Firstly, the
concept of pivot consistency, a special type of directional path
consistency, is quite complex. It is defined in terms of a variable
ordering, path (of length 2) consistency, and concepts in directed
graphs. As we show in this paper, Variable substitution is a much
simpler concept. It is intuitive and simple for binary CSPs, and it
extends also simply and naturally to non-binary CSPs. Secondly, by
the definition of pivot consistency, to make a CSP pivot consistent,
there must be a certain functional constraint on each of the {\em
non-root} variables. Variable substitution is more flexible. It can
be applied whenever there is a functional constraint in a problem.
Finally, to reduce a problem, the variable elimination algorithm
takes $\bigO(ed^2)$ while pivot consistency algorithm takes
$\bigO((n^2-r^2)d^2)$, where $r$ is the number of {\em root}
variables.

Another related approach is bucket elimination \cite{Dec99}. The
idea in common behind bucket elimination and variable substitution
is to exclude the impact of a variable on the whole problem.  The
difference between them lies in the way variable elimination is
performed. In each elimination step, substitution does not increase
the arity of the constraints while bucket elimination could generate
constraints with {\em higher arity} (possibly with exponential space
complexity). The former may generate more constraints than the
latter, but it will {\em not} increase the total number of
constraints in the problem.

Another methodologically related work is bucket elimination \cite{Dec99}. 
The common idea behind bucket elimination
and variable substitution is to exclude the impact of a variable on
the whole problem. However, they also differ in many aspects. Bucket
elimination deals with general constraints while variable
substitution is applicable only to functional constraints. Bucket
elimination assumes a variable ordering and eliminates the impact of
a variable $j$ on {\em all} relevant constraints that involve
variables before $j$. In contrast, variable substitution can be used
to eliminate the impact of a variable on {\em any number} of
relevant constraints. The ways to eliminate a variable are different
for the two methods. For example, consider the CSP shown in
Figure~\ref{fig:width}(a) where $c_{12}, c_{23}, c_{34}$, and
$c_{45}$ are functional.

\begin{figure}
\includegraphics{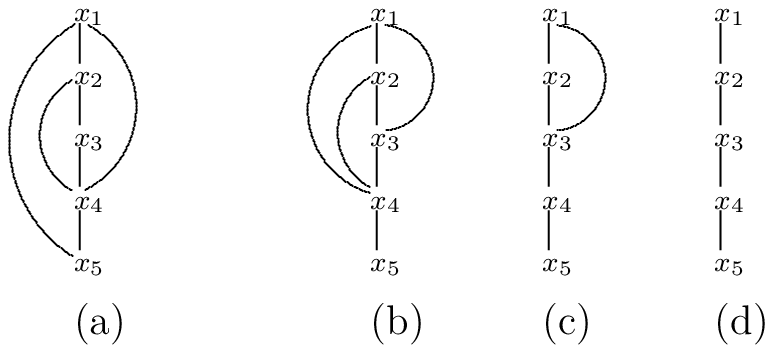}
\caption{\label{fig:width} A CSP with variable ordering $x_1, x_2,
\cdots, x_5$.}
\end{figure}

Assume there is a variable ordering $x_1, x_2, \cdots, x_5$. The
variables will be eliminated in the reverse of the variable
ordering. When eliminating a variable, say variable $x_4$, bucket
elimination considers constraints only involving variables before
(including) $x_4$ and ignores other constraints. In this example,
$c_{45}$ is ignored while constraints $c_{14},c_{24}, c_{34}$ are
considered relevant. After eliminating $x_4$, the new {\em ternary}
constraint $c_{\{1,2,3\}}$ on $x_1,x_2,x_3$ is added to the problem.

In the variable substitution method, for variable $x_4$ and the
constraint $c_{34}$ functional on it, we can choose to substitute
$x_4$ in one or some of the constraints $c_{41}, c_{42}$ and
$c_{45}$, depending on a specific setting (for example, a static CSP or
incremental CSP). If we choose to substitute $x_4$ in all these
constraints, new binary constraints $c_{31}, c_{32}, c_{35}$ are
added and old constraints $c_{41}, c_{42}$ and $c_{45}$ are
discarded.

This example shows that in each elimination step, bucket elimination
generates constraints with higher arity than variable substitution
while the latter generates more constraints than the former.
However, the variable substitution method will not increase the
total number of constraints in the problem (because every time a new
constraint is added, an old one is discarded). In the case of
bi-functional constraints, it decreases the total number of
constraints to $n$ after all variables are eliminated.

\section{Conclusion}
\label{sec:conclusion}

We have introduced a variable substitution method to reduce a
problem with both functional and non-functional constraints.
Compared with the previous work on bi-functional and functional
constraints, the new method is not only conceptually simple and
intuitive but also reflects the fundamental property of functional
constraints.

For a binary CSP with both functional and non-functional
constraints, an algorithm is presented to transform it into a
canonical functional form in $\bigO(ed^2)$. This leads to a
substantial simplification of the CSP with respect to the functional
constraints. In some cases, as one of our results (Corollary 2)
shows, the CSP is already solved. Otherwise, the canonical form can
be solved by ignoring the eliminated variables. For example, this
means that search only needs to solve a smaller problem than the one
before variable substitution (or elimination).

Our experiments show that variable elimination can significantly (in
some cases up to several orders of magnitude) improve the
performance of a general solver in dealing with functional
constraints. Our experiments also show some evidence that although
rarely, the elimination could slow down the general solver in a non
trivial way.

\section*{Acknowledgments}
We thank Chendong Li for helping implement the elimination algorithm
and carrying out some experiments in the earlier stage of this
research, and Satyanarayana Marisetti for writing the code for
generating random functional constraints and the functional
elimination ordering. Portions of this work was supported by National
Univ. of Singapore, grant 252-000-303-112.
\bibliographystyle{acmtrans}
\bibliography{cp}
\end{document}